\def\cC{{\mathcal{C}}}
\def\cK{{\mathcal{K}}}
\def\cS{{\mathcal{S}}}
\def\cE{{\mathcal{E}}}
\def\RR{{\mathbb R}}
\def\prox{{\text{prox}}}
\def\proj{{\text{proj}}}
\def\sign{{\text{sign}}}
\def\ST{{\text{ST}}}
\DeclareMathOperator*{\argmin}{argmin}
\DeclareMathOperator*{\dom}{dom}
\DeclareMathOperator*{\relint}{relint}
\DeclareMathOperator*{\diag}{diag}
\newtheorem{theorem}{Theorem}
\newtheorem{corollary}{Corollary}
\newtheorem{definition}{Definition}
\def\ones{{\mathbf{1}}}
\def\training{{\mathtt{tr}}}
\def\validation{{\mathtt{val}}}
\def\dataMatrix{\Phi}
\title{\Large{Efficient and Modular Implicit Differentiation}}
\author{%
    Mathieu Blondel,
    Quentin Berthet,
    Marco Cuturi\thanks{Work done while at Google Research, now at Apple and
    Owkin, respectively.},
    Roy Frostig, \\
    {\bf Stephan Hoyer,
    Felipe Llinares-L\'{o}pez,
    Fabian Pedregosa,
    Jean-Philippe Vert$^\ast$} \\
    \textnormal{Google Research}
}
\begin{document}

\maketitle

\begin{abstract}
Automatic differentiation (autodiff) has revolutionized machine learning.  It
allows to express complex computations by composing elementary ones in creative
ways and removes the burden of computing their derivatives by hand. More
recently, differentiation of optimization problem solutions has attracted
widespread attention with applications such as optimization layers, and in
bi-level problems such as hyper-parameter optimization and meta-learning.
However, so far, implicit differentiation remained difficult to use for
practitioners, as it often required case-by-case tedious mathematical
derivations and implementations. In this paper, we propose
automatic implicit differentiation, an efficient
and modular approach for implicit differentiation of optimization problems. In
our approach, the user defines directly in Python a function $F$ capturing the
optimality conditions of the problem to be differentiated. Once this is done, we
leverage autodiff of $F$ and the implicit function theorem to automatically
differentiate the optimization problem.  Our approach thus combines the benefits
of implicit differentiation and autodiff.  It is efficient as it can be added on
top of any state-of-the-art solver and modular as the optimality condition
specification is decoupled from the implicit differentiation mechanism.  We show
that seemingly simple principles allow to recover many existing implicit
differentiation methods and create new ones easily.  We demonstrate the ease of
formulating and solving bi-level optimization problems using our framework. We
also showcase an application to the sensitivity analysis of molecular dynamics.
\end{abstract}

\section{Introduction}

Automatic differentiation (autodiff) is now an inherent part of machine
learning software.  It allows to express complex computations by composing
elementary ones in creative ways and removes the tedious burden of computing
their derivatives by hand. In parallel, the differentiation of optimization
problem solutions has found many applications. A classical example is bi-level
optimization, which typically involves computing the derivatives of a nested
optimization problem in order to solve an outer one.
Examples of applications in machine learning include hyper-parameter
optimization \cite{chapelle_2002,seeger_2008,pedregosa2016hyperparameter,
franceschi_2017, bertrand_2020_implicit,bertrand_2021_journal},
neural networks \cite{lorraine_2020}, and
meta-learning \cite{franceschi_2018,meta_learning}. Another line of active
research involving differentiation of optimization problem solutions
are optimization layers
\cite{kim_2017,amos_2017,niculae_2017,djolonga_2017,
gould_2019}, which can be used to encourage structured outputs, and implicit
deep networks \cite{bai_2019,el_ghaoui_2019,fung2021fixed,geng_2021,ramzi_2021}, which have a smaller memory
footprint than backprop-trained networks.

Since optimization problem solutions typically do not enjoy an explicit formula
in terms of their inputs, autodiff cannot be used directly to differentiate
these functions.  In recent years, two main approaches have been developed to
circumvent this problem. The first one consists of unrolling the iterations of
an optimization algorithm and using the final iteration as a proxy for the
optimization problem solution \cite{wengert_1964, domke_2012, deledalle_2014,
franceschi_2018, ablin_2020}. 
This allows to \textbf{explicitly} construct a computational graph relating the
algorithm output to the inputs, on which autodiff can then be used
transparently.  
However, this
requires a reimplementation of the algorithm using the autodiff system, and not
all algorithms are necessarily autodiff friendly.
Moreover, forward-mode autodiff has time complexity that scales linearly with
the number of variables and reverse-mode autodiff has memory complexity
that scales linearly with the number of algorithm iterations.
In contrast, a second approach consists in \textbf{implicitly} relating
an optimization problem solution to its inputs using optimality conditions. 
In a machine learning context, such implicit differentiation has been used for
stationarity conditions \cite{bengio_2000,lorraine_2020}, KKT conditions
\cite{chapelle_2002, gould_2016, amos_2017,sparsemap,lp_sparsemap} and the
proximal gradient fixed point
\cite{niculae_2017,bertrand_2020_implicit,bertrand_2021_journal}. An advantage
of implicit differentiation is that a solver reimplementation is not needed,
allowing to build upon decades of state-of-the-art software.
Although implicit differentiation has a long history in numerical analysis
\cite{griewank_2008,bell_2008,krantz_2012,bonnans_2013}, so far, it remained
difficult to use for practitioners, as it required
a case-by-case tedious mathematical derivation and implementation. 
CasADi \cite{casadi} allows to differentiate various optimization and root
finding problem algorithms provided by the library. However, it does not allow
to easily add implicit differentiation on top of existing solvers from
optimality conditions expressed by the user, as we do.
A recent tutorial explains how to implement implicit differentiation in JAX
\cite{tutorial_implicit}.  However, the tutorial requires the user to take
care of low-level technical details and does not cover a large catalog of
optimality condition mappings as we do.
Other work \cite{agrawal_2019}
attempts to address this issue by adding implicit differentiation on top of
cvxpy \cite{cvxpy}. This works by reducing all convex optimization problems to a
conic program and using conic
programming's optimality conditions to derive an implicit differentiation
formula.
While this approach is very generic, solving a convex optimization problem
using a conic programming solver---an ADMM-based splitting conic solver
\cite{splitting_conic} in the case of cvxpy---is rarely state-of-the-art
for every problem instance. 

In this work, we ambition to achieve for optimization problem solutions what autodiff did for
computational graphs.
We propose \textbf{automatic implicit differentiation},
a simple approach to add implicit differentiation on top of any existing solver.
In this approach, the user defines directly in Python a mapping
function $F$ capturing the optimality conditions of the problem solved by the
algorithm.  Once this is done, we leverage autodiff of $F$ combined with
the implicit function theorem 
to automatically differentiate the optimization problem solution. 
Our approach is \textbf{generic}, yet it can exploit
the \textbf{efficiency} of state-of-the-art solvers. It therefore combines the
benefits of implicit differentiation and autodiff. To summarize, we make the
following contributions.
\begin{itemize}[topsep=0pt,itemsep=2pt,parsep=2pt,leftmargin=10pt]

\item We describe our framework and its JAX \cite{jax, frostig2018compiling}
    implementation (\url{https://github.com/google/jaxopt/}). 
    Our framework significantly {\bf lowers the
    barrier} to use implicit differentiation, thanks to the
    seamless integration in JAX, with low-level details all abstracted away.

\item We instantiate our framework on a {\bf large catalog} of
    optimality conditions (Table \ref{tab:mapping_summary}), recovering existing
    schemes and obtaining new ones, such as the mirror descent
    fixed point based one.

\item On the theoretical side, we provide new bounds on the {\bfseries Jacobian
    error} when the optimization problem is only solved approximately, and
    empirically validate them.

\item We implement four {\bfseries illustrative applications}, demonstrating our
    framework's ease of use.

\end{itemize}

Beyond our software implementation in JAX, we hope this paper provides a
\textbf{self-contained blueprint} for creating an efficient and modular
implementation of implicit differentiation in other frameworks.

\paragraph{Notation.}

We denote the gradient and Hessian of $f \colon \RR^d \to \RR$ evaluated at $x
\in \RR^d$ by $\nabla f(x) \in \RR^d$ and $\nabla^2 f(x) \in \RR^{d \times d}$.
We denote the Jacobian of $F \colon \RR^d \to \RR^p$ evaluated at $x \in \RR^d$
by $\partial F(x) \in \RR^{p \times d}$. When $f$ or $F$ have several arguments,
we denote the gradient, Hessian and Jacobian in the $i^{\text{th}}$ argument by
$\nabla_i$, $\nabla^2_i$ and $\partial_i$, respectively. The standard
probability simplex is denoted by $\triangle^d \coloneqq \{x \in \RR^d \colon
\|x\|_1 = 1, x \ge 0\}$. 
For any set $\cC\subset\RR^d$, we denote the indicator function $I_\cC \colon \RR^d
\rightarrow \RR\cup\{+\infty\}$ where $I_\cC(x) = 0$ if $x\in\cC$, $I_\cC(x) =
+\infty$ otherwise. For a vector or matrix $A$, we note $\|A\|$ the Frobenius
(or Euclidean) norm, and $\|A\|_{\text{op}}$ the operator norm.

\section{Automatic implicit differentiation}
\label{sec:framework}

\subsection{General principles}
\label{sec:general_principles}

\paragraph{Overview.}

Contrary to autodiff through unrolled algorithm iterations, implicit
differentiation typically involves a manual, sometimes complicated, mathematical
derivation.  For instance, numerous works \cite{chapelle_2002, gould_2016,
amos_2017,sparsemap,lp_sparsemap} use Karush–Kuhn–Tucker (KKT) conditions in
order to relate a constrained optimization problem's solution to its inputs, and
to manually derive a formula for its derivatives.  The derivation and
implementation in these works are typically case-by-case.

In this work, we propose a generic way to easily add implicit
differentiation on top of existing solvers. In our approach, the user defines
directly in Python a mapping function $F$ capturing
the optimality conditions of the problem solved by the algorithm.
We provide reusable building blocks to easily express such $F$.
The provided $F$ is then plugged into our Python decorator
\texttt{@custom\_root}, which we append on top of the solver declaration we wish to
differentiate.  Under the hood, we combine the implicit function theorem and
autodiff of $F$ to automatically differentiate the optimization problem
solution. A simple illustrative example is given in Figure \ref{fig:ridge}.

\begin{figure}[t]
\centering
\fbox{
\includegraphics[scale=0.72]{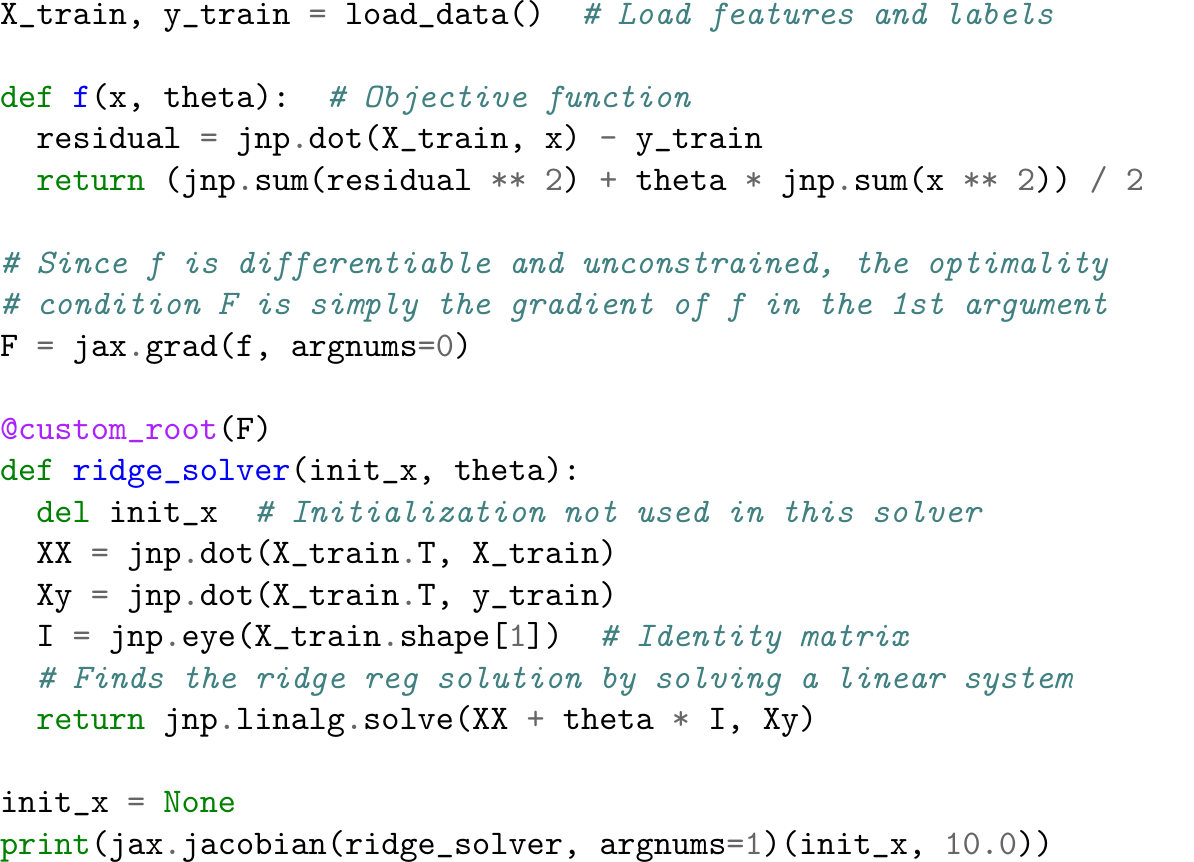}
}
\caption{Adding implicit differentiation on top of a ridge
regression solver. The function $f(x, \theta)$ defines the objective function
and the mapping $F$, here simply equation \eqref{eq:stationary_cond}, captures
the optimality conditions. Our decorator
\texttt{@custom\_root} automatically adds implicit
differentiation to the solver for the user, overriding JAX's default behavior.
The last line evaluates the Jacobian at $\theta = 10$.}
\label{fig:ridge}
\end{figure}

\paragraph{Differentiating a root.}

Let $F \colon \RR^d \times \RR^n \to \RR^d$ be a user-provided mapping,
capturing the optimality conditions of a problem. 
An optimal solution, denoted $x^\star(\theta)$, should be a
\textbf{root} of $F$:
\begin{equation}
F(x^\star(\theta), \theta) = 0\,.
\label{eq:root_pb}
\end{equation}
We can see $x^\star(\theta)$ as an implicitly defined function of $\theta \in
\RR^n$, i.e., $x^\star \colon \RR^n \to \RR^d$.
More precisely, from the \textbf{implicit function theorem}
\cite{griewank_2008,krantz_2012}, we know that for $(x_0, \theta_0)$ satisfying $F(x_0,
\theta_0) = 0$ with a continuously differentiable $F$, if the Jacobian
$\partial_1 F$ evaluated at $(x_0, \theta_0)$ is a square invertible matrix,
then there exists a function $x^\star(\cdot)$ defined on a neighborhood of
$\theta_0$ such that $x^\star(\theta_0) = x_0$. Furthermore, for all $\theta$ in this
neighborhood, we have that $F(x^\star(\theta), \theta) = 0$ and
$\partial x^\star(\theta)$ exists.
Using the chain rule, the Jacobian $\partial x^\star(\theta)$
satisfies
\begin{equation}
\partial_1 F(x^\star(\theta), \theta) \partial x^\star(\theta) + 
\partial_2 F(x^\star(\theta), \theta) = 0\,.
\end{equation}
Computing $\partial x^\star(\theta)$ therefore boils down to the resolution of
the linear system of equations
\begin{equation}
\underbrace{-\partial_1 F(x^\star(\theta), \theta)}_{A \in \RR^{d \times d}} 
\underbrace{\partial x^\star(\theta)}_{J \in \RR^{d \times n}}
= \underbrace{\partial_2 F(x^\star(\theta), \theta)}_{B \in \RR^{d \times n}}.
\label{eq:linear_system_root}
\end{equation}
When \eqref{eq:root_pb} is a one-dimensional root finding problem ($d=1$),
\eqref{eq:linear_system_root} becomes particularly simple since we then have
$\nabla x^\star(\theta) = B^\top / A$, where $A$ is a scalar value.

We will show that existing and new implicit differentiation methods all reduce
to this simple principle.
We call our approach \textbf{automatic implicit differentiation}
as the user can freely express the optimization solution to be
differentiated through the optimality conditions $F$. 
Our approach is \textbf{efficient} as it can be added on top of
any state-of-the-art solver and \textbf{modular} as the optimality condition
specification is \textbf{decoupled} from the implicit differentiation mechanism.
This contrasts with existing works, where the derivation and implementation are
specific to each optimality condition.

\paragraph{Differentiating a fixed point.}

We will encounter numerous applications where $x^\star(\theta)$ is
instead implicitly defined through a \textbf{fixed point}:
\begin{equation}
    x^\star(\theta) = T(x^\star(\theta), \theta)\,,
\end{equation}
where $T \colon \RR^d \times \RR^n \to \RR^d$.
This can be seen as a particular case of \eqref{eq:root_pb} 
by defining the \textbf{residual}
\begin{equation}
F(x, \theta) = T(x, \theta) - x\,.
\label{eq:fixed_point_to_root}
\end{equation}
In this case, when $T$ is continuously differentiable, 
using the chain rule, we have
\begin{equation}
A = -\partial_1 F(x^\star(\theta), \theta) = 
I -\partial_1 T(x^\star(\theta), \theta)
\quad \text{and} \quad
B = \partial_2 F(x^\star(\theta), \theta) = 
\partial_2 T(x^\star(\theta), \theta).
\end{equation}

\paragraph{Computing JVPs and VJPs.}

In most practical scenarios, it is not necessary to explicitly form the Jacobian
matrix, and instead it is sufficient to left-multiply or right-multiply by
$\partial_1 F$ and $\partial_2 F$. These are called vector-Jacobian product
(VJP) and Jacobian-vector product
(JVP), and are useful for integrating $x^\star(\theta)$ with reverse-mode and
forward-mode autodiff, respectively.
Oftentimes, $F$ will be explicitly defined. In this case, computing the VJP or
JVP can be done via autodiff. 
In some cases, $F$ may itself be implicitly defined, for instance
when $F$ involves the solution of a variational problem. In this case, computing
the VJP or JVP will itself involve implicit differentiation.

The right-multiplication (JVP) between $J = \partial x^\star(\theta)$ and a
vector $v$, $Jv$, can be computed efficiently by solving $A (Jv) = Bv$. 
The left-multiplication (VJP) of $v^\top$ with $J$,
$v^\top J$, can be computed by first solving $A^\top u = v$. Then, we can obtain
$v^\top J$ by $v^\top J = u^\top A J = u^\top B$. 
Note that when $B$ changes but $A$ and $v$ remain the same, we
do not need to solve $A^\top u = v$ once again. This allows to compute the VJP
w.r.t. different variables while solving only one linear system.

To solve these linear systems, we can use the conjugate gradient method
\cite{conjugate_gradient} when $A$ is symmetric positive semi-definite and GMRES
\cite{saad_1986} or BiCGSTAB \cite{Vorst1992-bicgstab} otherwise. These
algorithms are all matrix-free: they only require matrix-vector products.  Thus,
all we need from $F$ is its JVPs or VJPs.  An alternative to GMRES/BiCGSTAB is
to solve the normal equation $A A^\top u = A v$ using conjugate gradient.  This
can be
implemented using JAX's transpose routine
\texttt{jax.linear\_transpose} \cite{frostig2021decomposing}.
In case of non-invertibility, a common heuristic is to solve a least
squares $\min_J \|AJ-B\|^2$ instead.

\paragraph{Pre-processing and post-processing mappings.}

Oftentimes, the goal is not to differentiate $\theta$ per se, but the
parameters of a function producing $\theta$. One example of such pre-processing
is to convert the parameters to be differentiated from one form to another
canonical form, such as a quadratic program \cite{amos_2017} or a conic program
\cite{agrawal_2019}. 
Another example
is when $x^\star(\theta)$ is used as the output of a neural network layer, in
which case $\theta$ is produced by the previous layer. Likewise,
$x^\star(\theta)$ will often not be the final output we want to differentiate.
One example of such post-processing is when $x^\star(\theta)$ is the solution of
a dual program and we apply the dual-primal mapping to recover the solution of
the primal program.  Another example is the application of a loss function, in
order to reduce $x^\star(\theta)$ to a scalar value. We leave the
differentiation of such pre/post-processing mappings to the autodiff system,
allowing to compose functions in complex ways.

\paragraph{Implementation details.}

%
%

When a solver function is decorated with \texttt{@custom\_root}, we use
\texttt{jax.custom\_jvp} and \texttt{jax.custom\_vjp} to automatically add
custom JVP and VJP rules to the function, overriding JAX's default behavior.
As mentioned above, we use linear system solvers based on matrix-vector products
and therefore we only need access to $F$ through the JVP or VJP
with $\partial_1 F$ and $\partial_2 F$. This is done by using \texttt{jax.jvp} and
\texttt{jax.vjp}, respectively. Note that, as in Figure \ref{fig:ridge}, the
definition of $F$ will often include a gradient mapping $\nabla_1 f(x, \theta)$.
Thankfully, JAX supports second-order derivatives
transparently.  For convenience, our library also provides a
\texttt{@custom\_fixed\_point} decorator, for adding implicit differentiation on
top of a solver, given a fixed point iteration $T$; see code examples in
Appendix \ref{appendix:code_examples}.  

\begin{table}[t]
\caption{Summary of optimality condition mappings. Oracles are accessed through
their JVP or VJP.
}
\begin{center}
\begin{small}
\begin{tabular}{cccc}
\toprule
Name & Equation & Solution needed & Oracle \\
\midrule
Stationary & \eqref{eq:stationary_cond}, \eqref{eq:gradient_descent_fp} & 
Primal & $\nabla_1 f$ \\
KKT & \eqref{eq:kkt_conditions} & Primal \textit{and} dual & 
$\nabla_1 f$, $H$, $G$, $\partial_1 H$, $\partial_1 G$ \\
Proximal gradient & \eqref{eq:proximal_grad_fp} & Primal & 
$\nabla_1 f$, $\prox_{\eta g}$ \\
Projected gradient & \eqref{eq:proj_grad_fp} & Primal & $\nabla_1 f$, 
$\proj_{\cC}$ \\
Mirror descent & \eqref{eq:mirror_descent_fp} & Primal & $\nabla_1 f$,
$\proj_\cC^\varphi$, $\nabla \varphi$ \\
Newton & \eqref{eq:newton_opt_fp} & Primal &
$[\nabla^2_1 f(x, \theta)]^{-1}$, $\nabla_1 f(x, \theta)$ \\
Block proximal gradient & \eqref{eq:bcd_fp} & Primal &
$[\nabla_1 f]_j$, $[\prox_{\eta g}]_j$ \\
Conic programming & \eqref{eq:residual_map} & Residual map root & 
$\proj_{\RR^p \times \cK^* \times \RR_+}$ \\
\bottomrule
\end{tabular}
\end{small}
\end{center}
\label{tab:mapping_summary}
\end{table}

\subsection{Examples}
\label{sec:mapping_examples}

We now give various examples of mapping $F$ or fixed point iteration $T$,
recovering existing implicit differentiation methods and creating new ones.
Each choice of $F$ or $T$ implies different trade-offs in terms
of \textbf{computational oracles};  see Table
\ref{tab:mapping_summary}. 
Source code examples are given in Appendix \ref{appendix:code_examples}.

\paragraph{Stationary point condition.}

The simplest example is to differentiate through the implicit function
\begin{equation}
x^\star(\theta) = \argmin_{x \in \RR^d} f(x, \theta),
\end{equation}
where $f \colon \RR^d \times \RR^n \to \RR$ is twice differentiable, $\nabla_1
f$ is continuously differentiable, and $\nabla^2_1 f$ is invertible at
$(x^\star(\theta),\theta)$.
In this case, $F$ is simply the gradient mapping
\begin{equation}
F(x, \theta) = \nabla_1 f(x, \theta).
\label{eq:stationary_cond}
\end{equation}
We then have
$\partial_1 F(x, \theta) = \nabla^2_1 f(x, \theta)$
and
$\partial_2 F(x, \theta) = \partial_2 \nabla_1 f(x, \theta)$,
the Hessian of $f$ in its first argument and the
Jacobian in the second argument of $\nabla_1 f(x, \theta)$. In practice,
we use autodiff to compute Jacobian products automatically.
Equivalently, we can use the \textbf{gradient descent fixed point}
\begin{equation}
T(x, \theta) = x - \eta \nabla_1 f(x, \theta),
\label{eq:gradient_descent_fp}
\end{equation}
for all $\eta > 0$. Using \eqref{eq:fixed_point_to_root}, 
it is easy to check that we obtain
the same linear system since $\eta$ cancels out.

\paragraph{KKT conditions.}

As a more advanced example,
we now show that the KKT conditions, manually differentiated in several works 
\cite{chapelle_2002,gould_2016,amos_2017,sparsemap,lp_sparsemap}, fit
our framework. As we will see, the key will be to group the
optimal primal and dual variables as our $x^\star(\theta)$.
Let us consider the general problem
\begin{equation}
\argmin_{z \in \RR^p} f(z, \theta)
\quad \text{subject to} \quad
G(z, \theta) \le 0,
~ H(z, \theta) = 0,
\label{eq:generic_constrained_pb}
\end{equation}
where $z \in \RR^p$ is the primal variable,
$f \colon \RR^p \times \RR^n \to \RR$,
$G \colon \RR^p \times \RR^n \to \RR^r$
and $H \colon \RR^p \times \RR^n \to \RR^q$
are twice differentiable convex functions, and $\nabla_1 f$, $\partial_1 G$ and $\partial_1 H$ are continuously differentiable.
The stationarity, primal feasibility and complementary slackness conditions give
\begin{align}
\nabla_1 f(z, \theta) + [\partial_1 G(z, \theta)]^\top \lambda + 
[\partial_1 H(z, \theta)]^\top \nu = 0 \\
H(z, \theta) = 0 \\
\lambda \circ G(z, \theta) = 0,
\label{eq:kkt_conditions}
\end{align}
where $\nu \in \RR^q$ and $\lambda \in \RR^r_+$ are the dual variables, also
known as KKT multipliers.
The primal and dual feasibility conditions can be ignored almost everywhere
\cite{tutorial_implicit}.
The system of (potentially nonlinear) equations \eqref{eq:kkt_conditions}
fits our framework, as we can group the primal and dual solutions as
$x^\star(\theta) = (z^\star(\theta), \nu^\star(\theta), \lambda^\star(\theta))$
to form the root of a function $F(x^\star(\theta), \theta)$, where $F \colon
\RR^d \times \RR^n \to \RR^d$ and $d = p + q + r$. The primal and dual solutions
can be obtained from a generic solver, such as an interior point method.
In practice, the above mapping $F$ will be defined directly in Python
(see Figure \ref{fig:kkt_code} in Appendix \ref{appendix:code_examples}) and
$F$ will be differentiated automatically via autodiff.

\paragraph{Proximal gradient fixed point.}

Unfortunately, not all algorithms return both primal and dual solutions. 
Moreover, if the objective contains non-smooth terms,
proximal gradient descent may be more efficient.
We now discuss its fixed point \cite{niculae_2017, bertrand_2020_implicit,
bertrand_2021_journal}.
Let $x^\star(\theta)$ be implicitly defined as
\begin{equation}
x^\star(\theta) \coloneqq \argmin_{x \in \RR^d} f(x, \theta) + g(x, \theta),
\label{eq:composite_pb}
\end{equation}
where $f \colon \RR^d \times \RR^n \to \RR$ is twice-differentiable convex
and $g \colon \RR^d \times \RR^n \to \RR$ is convex but possibly non-smooth.
Let us define the proximity operator associated with $g$ by
\begin{equation}
\prox_g(y, \theta) \coloneqq 
\argmin_{x \in \RR^d} \frac{1}{2} \|x - y\|^2_2 + g(x, \theta).
\end{equation}
To implicitly differentiate $x^\star(\theta)$, we use the fixed point
mapping \cite[p.150]{parikh_2014}
\begin{equation}
T(x, \theta) = \prox_{\eta g}(x - \eta \nabla_1 f(x, \theta), \theta),
\label{eq:proximal_grad_fp}
\end{equation}
for any step size $\eta > 0$.
The proximity operator is
$1$-Lipschitz continuous \cite{Moreau1965Proximite}. By Rademacher's theorem, it
is differentiable almost
everywhere. If, in addition, it is continuously differentiable in a
neighborhood of $(x^\star(\theta),\theta)$ and if $I - \partial_1
T(x^\star(\theta),\theta)$ is invertible, then our framework to differentiate
$x^\star(\theta)$ applies. Similar assumptions are made in
\cite{bertrand_2021_journal}.
Many proximity operators enjoy a closed form and can
easily be
differentiated, as discussed in Appendix \ref{appendix:jac_prod}.
An implementation is given in Figure \ref{fig:pg_fixed_point}.

\paragraph{Projected gradient fixed point.} 

As a special case,
when $g(x, \theta)$ is the indicator function $I_{\cC(\theta)}(x)$,
where $\cC(\theta)$ is a convex set depending on $\theta$, we obtain
\begin{equation}
x^\star(\theta) = \argmin_{x \in \cC(\theta)} f(x, \theta).
\label{eq:constrained_pb}
\end{equation}
The proximity operator $\prox_g$ becomes the Euclidean projection onto
$\cC(\theta)$
\begin{equation}
\prox_g(y, \theta) =
\proj_{\cC}(y, \theta) \coloneqq
\argmin_{x \in \cC(\theta)} \|x - y \|^2_2
\label{eq:projection}
\end{equation}
and \eqref{eq:proximal_grad_fp} becomes the projected gradient fixed point
\begin{equation}
T(x, \theta) = \proj_{\cC}(x - \eta \nabla_1 f(x, \theta), \theta).
\label{eq:proj_grad_fp}
\end{equation}
Compared to the KKT conditions,
this fixed point is particularly suitable when the
projection enjoys a closed form.
We discuss how to compute the JVP / VJP for a wealth of convex sets in Appendix
\ref{appendix:jac_prod}. 

\begin{figure}[t]
\centering
\fbox{
\includegraphics[scale=0.8]{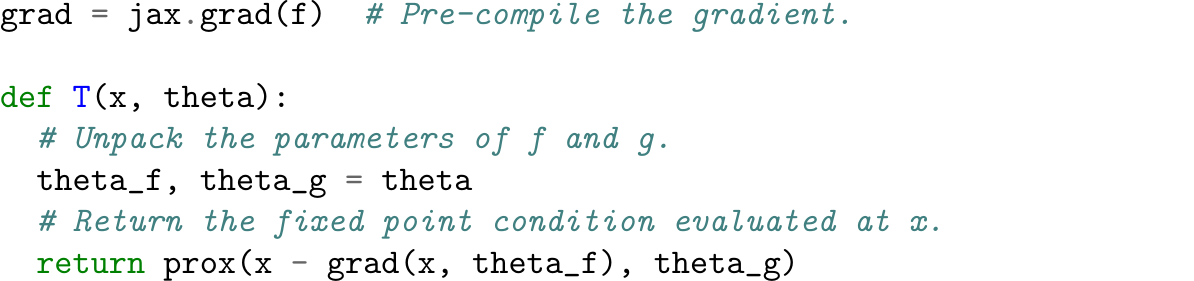}
}
\caption{Implementation of the proximal gradient fixed point 
\eqref{eq:proximal_grad_fp} with step size $\eta=1$.}
\label{fig:pg_fixed_point}
\end{figure}

\paragraph{Current limitations.}

While we have not observed issues in practice, we note that the approach
developed in this section theoretically only applies to settings where
the implicit function theorem is valid, namely, where optimality conditions
satisfy the differentiability and invertibility conditions stated in
\S\ref{sec:general_principles}. While this covers a wide
range of situations even for non-smooth optimization problems (e.g., under mild assumptions the
solution of a Lasso regression can be differentiated a.e. with respect to the
regularization parameter, see Appendix~\ref{sec:lasso}), an interesting direction for future work is to extend
the framework to handle cases where the differentiability and invertibility
conditions are not satisfied, using, e.g., the theory of nonsmooth implicit
function theorems \cite{Clarke1983Optimization,Bolte2021Nonsmooth}.

\section{Jacobian precision guarantees}
\label{sec:jac_bounds}


In practice, either by the limitations of finite precision arithmetic or because
we perform a finite number of iterations, we rarely reach the exact solution
$x^\star(\theta)$. Instead, we reach an approximate solution
$\hat{x}$ and apply the implicit differentiation equation 
\eqref{eq:linear_system_root} at this approximate solution. This motivates the
need for precision guarantees of this approach. We introduce the following
formalism.

\begin{definition}
\label{DEF:jac-est}
Let $F:\RR^d \times \RR^n \to \RR^d$ be a continously differentiable optimality criterion mapping.
Let $A \coloneqq -\partial_1 F$ and $B \coloneqq \partial_2 F$.
We define the \textbf{Jacobian estimate} at $(x, \theta)$, when $A(x,\theta)$ is
invertible,
as the solution to the linear equation
$A(x, \theta) J(x, \theta) = B(x, \theta)$.
It is a function $J: \RR^d \times \RR^n \to \RR^{d \times n}$.
\end{definition}
It holds by construction that
$J(x^\star(\theta), \theta) = \partial x^\star(\theta)$.
Computing $J(\hat x, \theta)$ for an approximate solution $\hat
x$ of $x^\star(\theta)$ therefore
allows to approximate the true Jacobian
$\partial x^\star(\theta)$. In practice, an algorithm used to solve
\eqref{eq:root_pb} depends on $\theta$. Note however that, what we
compute is not the Jacobian of $\hat x(\theta)$, unlike works
differentiating through unrolled algorithm iterations, but an
estimate of $\partial x^\star(\theta)$. We therefore use the notation
$\hat x$, leaving the dependence on $\theta$ implicit.

We develop bounds of the form $\|J(\hat{x}, \theta) - \partial x^\star(\theta)\|
< C \|\hat x - x^\star(\theta)\|$, hence showing that the error on the estimated
Jacobian is at most of the same order as that of $\hat x$ as an approximation of
$x^{\star}(\theta)$. These bounds are based on the following main theorem, whose
proof is included in Appendix \ref{appendix:proofs}.
\begin{theorem}\label{thm:jacob}
Let $F:\RR^d \times \RR^n \to \RR^d$ be continuously differentiable. If there are
$\alpha, \beta, \gamma, \varepsilon, R>0$ s.t. $A = -\partial_1 F$ and $B =
\partial_2 F$ satisfy, for all $v\in\RR^d$, $\theta \in \RR^n$ and $x$ s.t. 
$\|x - x^\star(\theta)\| \le \varepsilon$:

$A$ is well-conditioned, Lipschitz: $\|A(x, \theta) v \| \ge \alpha \|v\|$ ,
$\|A(x, \theta) - A(x^\star(\theta), \theta)\|_{\textnormal{op}} \le \gamma \|x -
x^\star(\theta)\|$.

$B$ is bounded and Lipschitz: $\|B(x^\star(\theta), \theta)\| \le R$ , $\|B(x, \theta) - B(x^\star(\theta), \theta)\| \le \beta \|x - x^\star(\theta)\|$.

Under these conditions, when $\|\hat x - x^\star(\theta)\| \le \varepsilon$, we have
\[
\|J(\hat x, \theta) - \partial x^\star (\theta)\| \le \left(\beta\alpha^{-1} + \gamma R \alpha^{-2}\right) \|\hat x - x^\star(\theta)\|\, .
\]
\end{theorem}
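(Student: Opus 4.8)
The plan is to use the explicit form of the Jacobian estimate, $J(x,\theta) = A(x,\theta)^{-1}B(x,\theta)$ (well-defined whenever $A(x,\theta)$ is invertible, by Definition \ref{DEF:jac-est}), together with the resolvent identity, to compare $J(\hat x,\theta)$ with $J(x^\star(\theta),\theta) = \partial x^\star(\theta)$. So this is, at heart, a perturbation estimate for the map $(A,B)\mapsto A^{-1}B$.

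First I would record the consequences of the well-conditioning hypothesis. The bound $\|A(x,\theta)v\|\ge\alpha\|v\|$ for all $v$ says the smallest singular value of $A(x,\theta)$ is at least $\alpha$; since $A(x,\theta)$ is square, it is invertible with $\|A(x,\theta)^{-1}\|_{\textnormal{op}}\le\alpha^{-1}$, and this holds for every $x$ with $\|x-x^\star(\theta)\|\le\varepsilon$ — in particular for $x=\hat x$ and for $x=x^\star(\theta)$ (the latter trivially satisfying the constraint). Writing $A^\star \coloneqq A(x^\star(\theta),\theta)$, $B^\star\coloneqq B(x^\star(\theta),\theta)$, $\hat A\coloneqq A(\hat x,\theta)$, $\hat B\coloneqq B(\hat x,\theta)$, this also yields the a priori bound $\|\partial x^\star(\theta)\| = \|(A^\star)^{-1}B^\star\| \le \|(A^\star)^{-1}\|_{\textnormal{op}}\|B^\star\| \le R\alpha^{-1}$, using the hypothesis $\|B^\star\|\le R$.

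Next I would decompose the error. From $\hat A^{-1}\hat B - (A^\star)^{-1}B^\star = \hat A^{-1}(\hat B - B^\star) + \big(\hat A^{-1} - (A^\star)^{-1}\big)B^\star$ and the resolvent identity $\hat A^{-1} - (A^\star)^{-1} = \hat A^{-1}(A^\star - \hat A)(A^\star)^{-1}$, we get
\[
J(\hat x,\theta) - \partial x^\star(\theta) = \hat A^{-1}(\hat B - B^\star) + \hat A^{-1}(A^\star - \hat A)(A^\star)^{-1}B^\star .
\]
Taking Frobenius norms and applying submultiplicativity $\|MN\|\le\|M\|_{\textnormal{op}}\|N\|$ term by term: the first term is at most $\|\hat A^{-1}\|_{\textnormal{op}}\,\|\hat B - B^\star\| \le \alpha^{-1}\beta\|\hat x - x^\star(\theta)\|$ by the Lipschitz bound on $B$; the second is at most $\|\hat A^{-1}\|_{\textnormal{op}}\,\|A^\star - \hat A\|_{\textnormal{op}}\,\|(A^\star)^{-1}B^\star\| \le \alpha^{-1}\cdot\gamma\|\hat x - x^\star(\theta)\|\cdot R\alpha^{-1}$ by the Lipschitz bound on $A$ and the a priori bound on $\|\partial x^\star(\theta)\|$. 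Summing gives $\|J(\hat x,\theta)-\partial x^\star(\theta)\| \le (\beta\alpha^{-1} + \gamma R\alpha^{-2})\|\hat x - x^\star(\theta)\|$, as claimed.

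There is no deep obstacle here; the only points requiring care are bookkeeping ones. One is ensuring that the invertibility of $A$ — hence the very well-definedness of $J(\hat x,\theta)$ via Definition \ref{DEF:jac-est} — is guaranteed throughout the $\varepsilon$-ball, which is exactly what the well-conditioning hypothesis delivers. The other is keeping the operator-norm versus Frobenius-norm distinctions straight when invoking submultiplicativity, so that the mixed norms appearing in the hypotheses ($\|\cdot\|_{\textnormal{op}}$ for the $A$-differences, $\|\cdot\|$ for $B$ and its differences) combine correctly into the Frobenius norm stated in the conclusion.
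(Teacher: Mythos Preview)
Your proof is correct and follows essentially the same route as the paper: the same add-and-subtract decomposition $\hat A^{-1}\hat B - (A^\star)^{-1}B^\star = \hat A^{-1}(\hat B - B^\star) + (\hat A^{-1} - (A^\star)^{-1})B^\star$, the same resolvent identity, and the same termwise bounds. Your additional remarks on invertibility within the $\varepsilon$-ball and on the operator/Frobenius norm bookkeeping are accurate and, if anything, make the argument slightly more explicit than the paper's version.
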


This result is inspired by \cite[Theorem 7.2]{Higham2002Accuracy}, that is
concerned with the stability of solutions to inverse problems. As a difference, 
we consider
that $A(\cdot, \theta)$ is uniformly well-conditioned, rather than only at
$x^\star(\theta)$. This does not affect the first order in $\varepsilon$ of this
bound, and makes it valid for all $\hat x$. 
Our goal with Theorem~\ref{thm:jacob} is to provide a result
that works for general $F$ but can be tailored to specific cases. 

In particular, for the gradient descent fixed point
\eqref{eq:gradient_descent_fp},
this yields
\[
A(x, \theta) = \eta \nabla_1^2 f(x, \theta)\,  \text{and} \; B(x, \theta) = 
- \eta \partial_2 \nabla_1 f(x, \theta)\, .
\]
By specializing Theorem~\ref{thm:jacob} for this fixed point, 
we obtain Jacobian precision guarantees with conditions directly on $f$ rather
than $F$; see Corollary~\ref{cor:precision-gd} in Appendix
\ref{appendix:proofs}.
These guarantees hold for instance for the dataset distillation experiment
in Section \ref{sec:exp}.
Our analysis reveals in particular that Jacobian estimation by implicit differentiation
\textbf{gains a factor of} $\mathbf{t}$ \textbf{compared to automatic
differentiation}, after $t$ iterations of gradient descent in the
strongly-convex setting \cite[Proposition 3.2]{ablin_2020}. 
While our guarantees concern the Jacobian of $x^\star(\theta)$, we note that
other studies \cite{grazzi_2020,ji_2021,bertrand_2021_journal} give
guarantees on hypergradients (i.e., the gradient of an outer objective).

\begin{wrapfigure}[13]{r}{0.41\textwidth}
    \centering
    \vspace{-0.5cm}
    \includegraphics[width=0.4\textwidth]{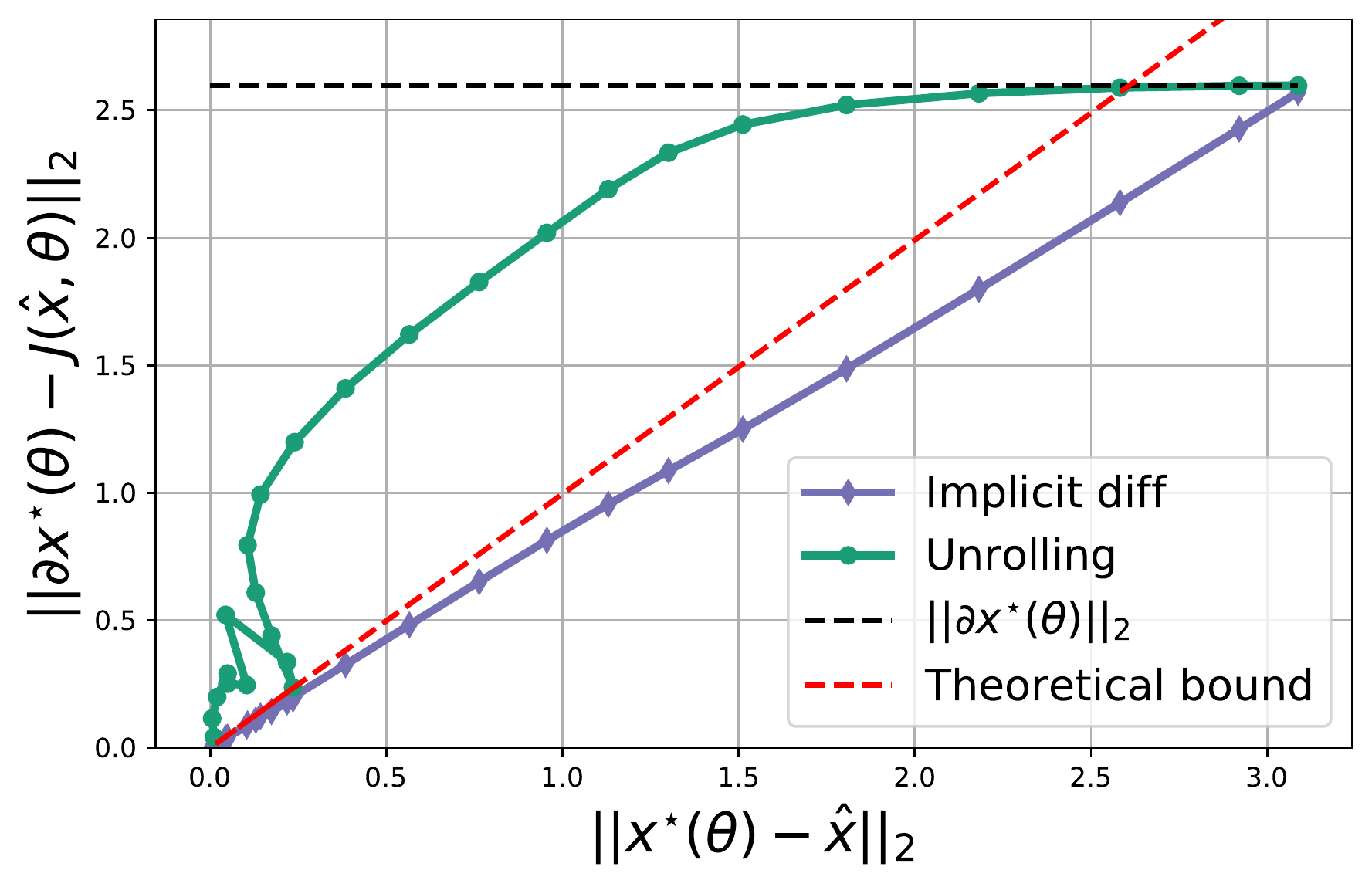}\\
    \caption{Jacobian estimate errors. Empirical error of implicit
    differentiation follows closely the theoretical upper bound. 
Unrolling achieves a much worse error for comparable iterate error.}
    \label{fig:jac-precision}
\end{wrapfigure}

We illustrate these results on ridge regression, where
$x^\star(\theta) = \argmin_x \|\Phi x -y\|^2_2 + \sum_i
\theta_i x_i^2$. 
This problem has the merit that the solution $x^\star(\theta)$ and its Jacobian
$\partial x^\star(\theta)$ are available in closed form.
By running gradient descent for $t$ iterations,
we obtain an estimate $\hat x$ of $x^\star(\theta)$ and an estimate $J(\hat x,
\theta)$ of $\partial x^\star(\theta)$; cf. Definition \ref{DEF:jac-est}.
By doing so for different numbers of iterations $t$, we can graph the relation
between the error $\|x^\star(\theta) - \hat x\|_2$ and the error
$\|\partial x^\star(\theta) - J(\hat x, \theta)\|_2$, as shown in Figure
\ref{fig:jac-precision},
empirically validating Theorem~\ref{thm:jacob}.
The results in Figure \ref{fig:jac-precision} were obtained using the diabetes
dataset from~\cite{efron2004least}, with other datasets yielding a qualitatively
similar behavior. We derive similar guarantees in
Corollary~\ref{cor:precision-prox} in Appendix \ref{appendix:proofs} for
proximal gradient descent.

\section{Experiments}
\label{sec:exp}

In this section, we demonstrate the ease of solving bi-level
optimization problems with our framework. We also present an
application to the sensitivity analysis of molecular dynamics.

\subsection{Hyperparameter optimization of multiclass SVMs}

In this example, we consider the hyperparameter optimization of multiclass SVMs
\cite{crammer_2001}
trained in the dual. Here, $x^\star(\theta)$ is the optimal dual
solution, a matrix of shape $m \times k$, where $m$ is the number of training
examples and $k$ is the number of classes, and $\theta \in \RR_+$ is the
regularization parameter. The challenge
in differentiating $x^\star(\theta)$ is that each row of $x^\star(\theta)$
is constrained to belong to the probability simplex $\triangle^k$. More formally,
let $X_\training \in \RR^{m \times p}$ be the training feature matrix and
$Y_\training \in 
\{0, 1\}^{m \times k}$ be the training labels (in row-wise one-hot encoding). 
Let $W(x, \theta) \coloneqq X_\training^\top (Y_\training - x) / \theta \in
\RR^{p \times k}$ be the dual-primal mapping.
Then, we consider the following bi-level optimization problem
\begin{equation}
\small
\underbrace{\min_{\theta = \exp(\lambda)}
\frac{1}{2} \|X_\validation W(x^\star(\theta), \theta) -
Y_\validation\|^2_F}_{\text{outer problem}}
\quad \text{subject to} \quad
\underbrace{x^\star(\theta) = \argmin_{x \in \cC} 
f(x, \theta) \coloneqq \frac{\theta}{2} \|W(x, \theta)\|^2_F
+ \langle x, Y_\training \rangle}_{\text{inner problem}},
\label{eq:multiclass_svm_obj}
\end{equation}
where $\cC = \triangle^k \times \dots \times \triangle^k$ is the Cartesian
product of $m$ probability simplices.
We apply the change of variable $\theta = \exp(\lambda)$ in order to guarantee
that the hyper-parameter $\theta$ is positive.
The matrix $W(x^\star(\theta), \theta) \in \RR^{p
\times k}$ contains the optimal primal solution, the feature weights for each
class.  The outer loss is computed against validation data $X_\validation$ and
$Y_\validation$. 

\begin{figure}[t]
\begin{subfigure}{.33\textwidth}
  \centering
  \includegraphics[width=\linewidth]{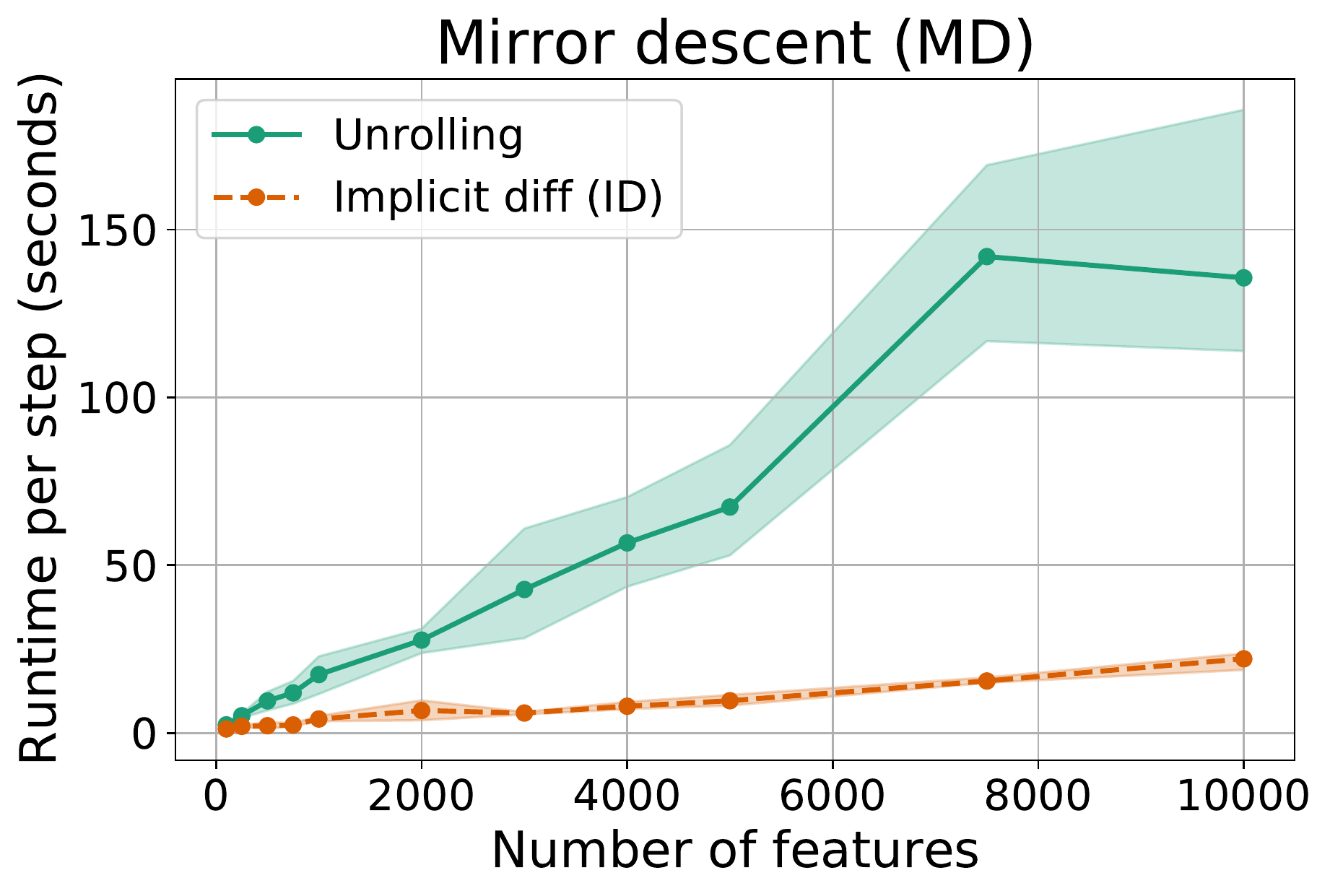}
  \caption{}
  \label{fig:multiclass_svm_runtime_cpu_a}
\end{subfigure}
\begin{subfigure}{.33\textwidth}
  \centering
  \includegraphics[width=\linewidth]{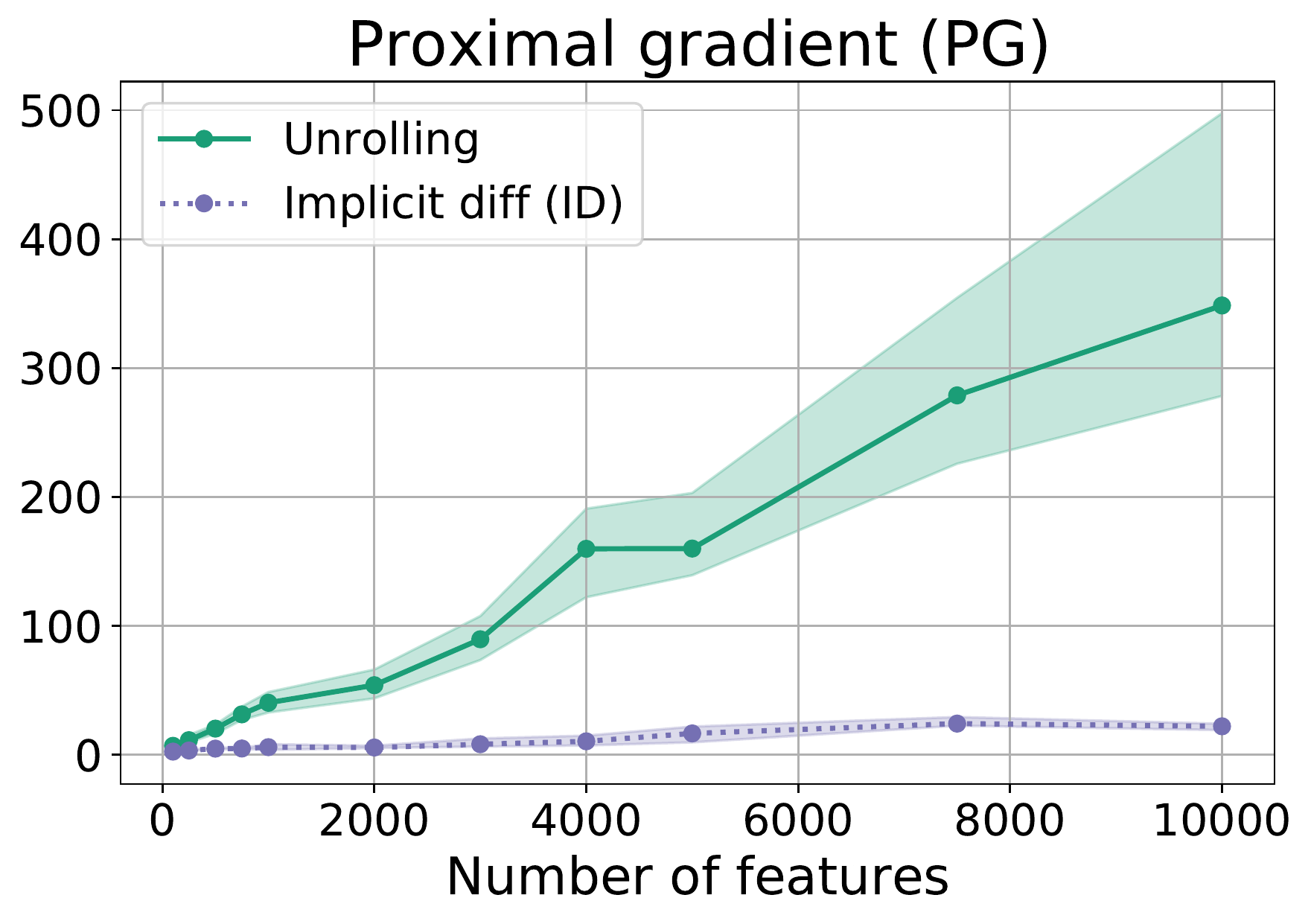}
  \caption{}
  \label{fig:multiclass_svm_runtime_cpu_b}
\end{subfigure}
\begin{subfigure}{.33\textwidth}
  \centering
  \includegraphics[width=\linewidth]{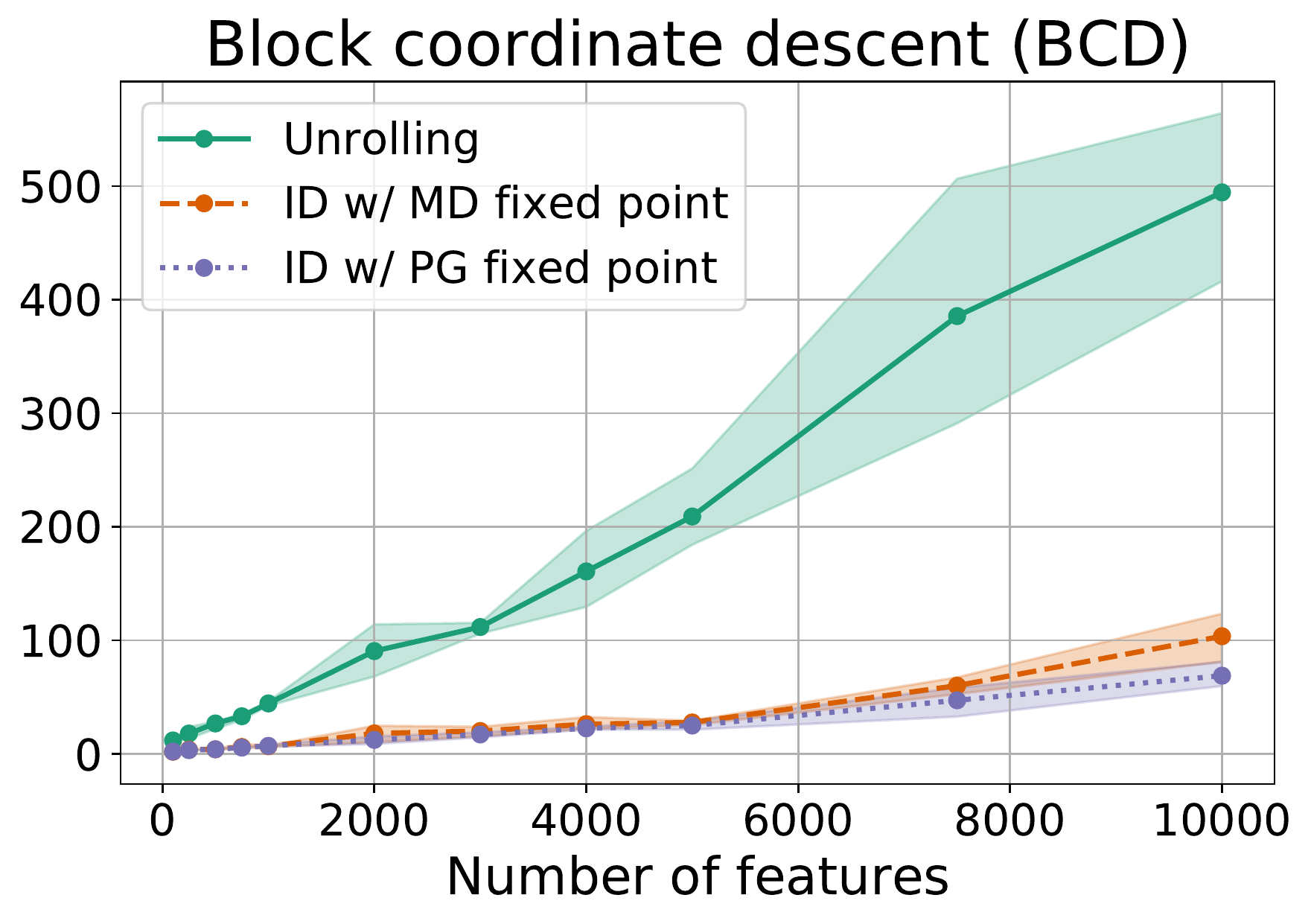}
  \caption{}
  \label{fig:multiclass_svm_runtime_cpu_c}
\end{subfigure}
\caption{CPU runtime comparison of implicit differentiation and unrolling for
hyperparameter optimization of multiclass SVMs for multiple problem sizes. Error
bars represent 90\% confidence
intervals. {\bf (a)} Mirror descent (MD) solver, with MD fixed point for
differentiation. {\bf (b)} Proximal gradient (PG) solver, with PG
fixed point for differentiation. {\bf (c)} Block coordinate
descent solver;
for implicit differentiation we obtain $x^\star(\theta)$ by BCD
but perform differentiation with the MD 
and PG fixed points. This
shows that the solver and fixed point can be independently chosen. }
\label{fig:multiclass_svm_runtime_cpu}
\end{figure}

While KKT conditions can be used to differentiate $x^\star(\theta)$, a more
direct way is to use the projected gradient fixed point \eqref{eq:proj_grad_fp}.
The projection onto $\cC$ can be easily computed by row-wise projections on the
simplex. The projection's Jacobian enjoys a closed form (Appendix
\ref{appendix:jac_prod}).  Another way to differentiate $x^\star(\theta)$
is using the mirror descent fixed point \eqref{eq:mirror_descent_fp}.  Under the
KL geometry, projections correspond to a row-wise softmax.
They are therefore easy to compute and differentiate.  Figure
\ref{fig:multiclass_svm_runtime_cpu} compares the runtime performance of
implicit differentiation vs.\ unrolling for the latter two fixed points.

\subsection{Dataset distillation}
\label{sec:distillation}

Dataset distillation \cite{wang2018dataset,lorraine_2020} aims to
learn a small synthetic training dataset such that a model trained on this
learned data set achieves a small loss on the original training set. 
Formally, let
$X_\training \in \RR^{m \times p}$ and $y_\training \in [k]^m$ 
denote the original training set. 
The distilled dataset will contain one prototype example
for each class and therefore $\theta \in \RR^{k \times p}$. 
The dataset distillation problem can then naturally be cast as a
bi-level problem, where in the inner problem we estimate a logistic regression
model $x^\star(\theta) \in \RR^{p \times k}$ trained on the distilled images $\theta \in
\RR^{k \times p}$, while in the outer problem we want to minimize the loss
achieved by $x^\star(\theta)$ over the training set: 
\begin{equation}
\underbrace{\min_{\theta \in \mathbb{R}^{k \times p}} f(x^\star(\theta),
X_\training; y_\training)}_{\text{outer problem}} ~\text{ subject to }~
x^\star(\theta) \in \underbrace{\argmin_{x \in \RR^{p \times k}} f(x, \theta; [k]) +
\varepsilon \|x\|^2\,}_{\text{inner problem}},
\label{eq:bilevel_distillation}
\end{equation}
where $f(W, X; y) \coloneqq \ell(y, XW)$, $\ell$ denotes the
multiclass logistic regression loss, and
$\varepsilon = 10^{-3}$ is a regularization parameter that we found had
a very positive effect on convergence.

\begin{wrapfigure}[16]{r}{0.36\textwidth}
    \centering
    \vspace{-0.5cm}
    \includegraphics[width=0.35\textwidth]{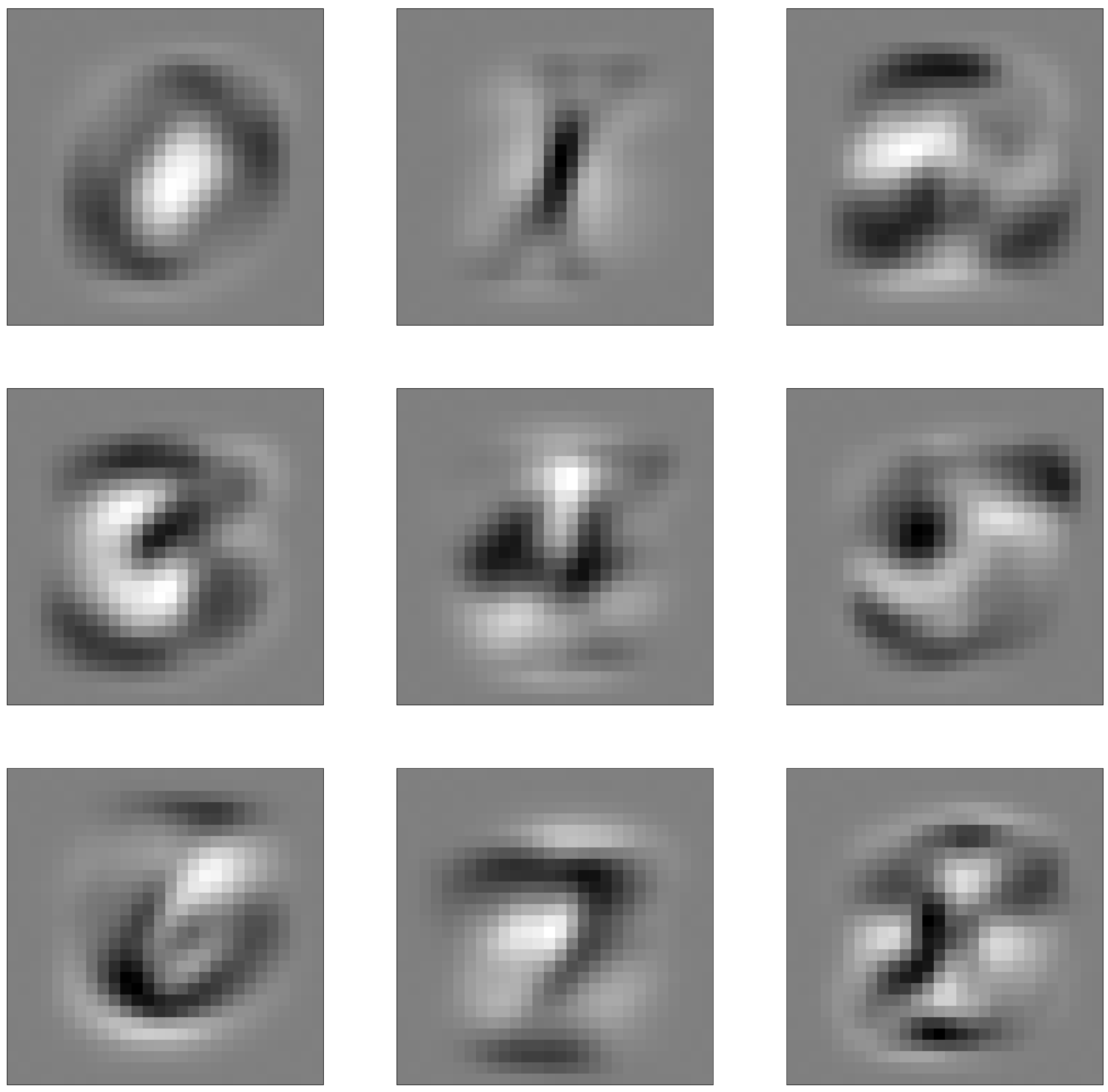}\\
\caption{Distilled dataset $\theta \in \RR^{k \times p}$ obtained by
    solving \eqref{eq:bilevel_distillation}. 
}
    \label{fig:distillation}
\end{wrapfigure}

In this problem, and unlike in the general hyperparameter optimization setup,
\emph{both} the inner and outer problems are high-dimensional, making it an
ideal test-bed for gradient-based bi-level optimization methods. 
For this experiment, we use the MNIST dataset.
The number of
parameters in the inner problem is $p = 28^2 = 784$.  while the number of
parameters of the outer loss is $k\times p = 7840$.
We solve this problem using gradient descent on both the inner and outer
problem, with the gradient of the outer loss computed using implicit
differentiation, as described in \S\ref{sec:framework}. This is
fundamentally different from the approach used in the original paper, where they
used differentiation of the unrolled iterates instead.
For the same solver, we found that the implicit differentiation approach was 4
times faster than the original one.  The obtained distilled images $\theta$ are
visualized in Figure \ref{fig:distillation}.

\subsection{Task-driven dictionary learning}
\label{sec:sparse-coding}

Task-driven dictionary learning was proposed to learn sparse codes for input
data in such a way that the codes solve an outer learning
problem~\cite{mairal_2012,sprechmann2014supervised,zarka2019deep}.
Formally, given a data matrix $X_\training \in \RR^{m \times p}$ 
and a dictionary of $k$ atoms $\theta \in \RR^{k \times p}$, a sparse code is
defined as a matrix $x^\star(\theta)\in\mathbb{R}^{m\times k}$ that minimizes in
$x$ a reconstruction loss $f(x, \theta) \coloneqq \ell(X_\training, x\theta)$
regularized by a sparsity-inducing penalty $g(x)$. Instead of optimizing the
dictionary $\theta$ to minimize the reconstruction loss, \cite{mairal_2012}
proposed to optimize an outer problem that depends on the code. 
Given a set of labels $Y_\training \in \{0, 1\}^{m}$, we consider a logistic
regression problem which results in the bilevel optimization problem:
\begin{equation}\label{eq:bilevel_sparse_coding}
\underbrace{\min_{\theta \in \mathbb{R}^{k \times p}, w\in\mathbb{R}^k, b\in\RR}
\sigma(x^\star(\theta)w + b; y_\training)}_{\text{outer problem}} ~\text{
subject to }~
x^\star(\theta) \in \underbrace{\argmin_{x \in \mathbb{R}^{m\times k}} f(x,
\theta) + g(x)}_{\text{inner problem}}\,.
\end{equation}
When $\ell$ is the squared Frobenius distance between matrices, and $g$ the
elastic net penalty,~\cite[Eq. 21]{mairal_2012} derive manually, using
optimality conditions (notably the support of the codes selected at the
optimum), an explicit re-parameterization of $x^\star(\theta)$ as a linear
system involving $\theta$. This closed-form allows for a \textit{direct}
computation of the Jacobian of $x^\star$ w.r.t. $\theta$. Similarly,
\cite{sprechmann2014supervised} derive first order conditions in the case where
$\ell$ is a $\beta$-divergence, while \cite{zarka2019deep} propose to use
unrolling of ISTA iterations. Our approach bypasses all of these manual
derivations, giving the user more leisure to focus directly on modeling (loss,
regularizer) aspects.

We illustrate this on breast cancer survival prediction from gene
expression data. We frame it as a binary classification problem to discriminate
patients who survive longer than 5 years ($m_1=200$) vs patients who die within
5 years of diagnosis ($m_0=99$), from $p=1,000$ gene expression values. As shown
in Table~\ref{tab:cancerAUC}, solving \eqref{eq:bilevel_sparse_coding}
(Task-driven DictL) reaches a classification performance competitive with
state-of-the-art $L_1$ or $L_2$ regularized logistic regression with 100 times
fewer variables.  

\begin{table}[t]
\caption{Mean AUC (and 95\% confidence interval) for the cancer survival prediction problem.}
\begin{center}
\begin{tabular}{c|cccc}
\toprule
Method & $L_1$ logreg & $L_2$ logreg & DictL + $L_2$ logreg & Task-driven DictL \\
\midrule
AUC (\%)& $71.6 \pm 2.0$ & $72.4 \pm 2.8$ & $68.3 \pm 2.3$ & $73.2 \pm 2.1$\\
\bottomrule
\end{tabular}
\end{center}
\label{tab:cancerAUC}
\end{table}

\subsection{Sensitivity analysis of molecular dynamics}

\begin{wrapfigure}[16]{r}{0.31\textwidth}
    \centering
    \vspace{-0.8cm}
    \includegraphics[width=0.30\textwidth]{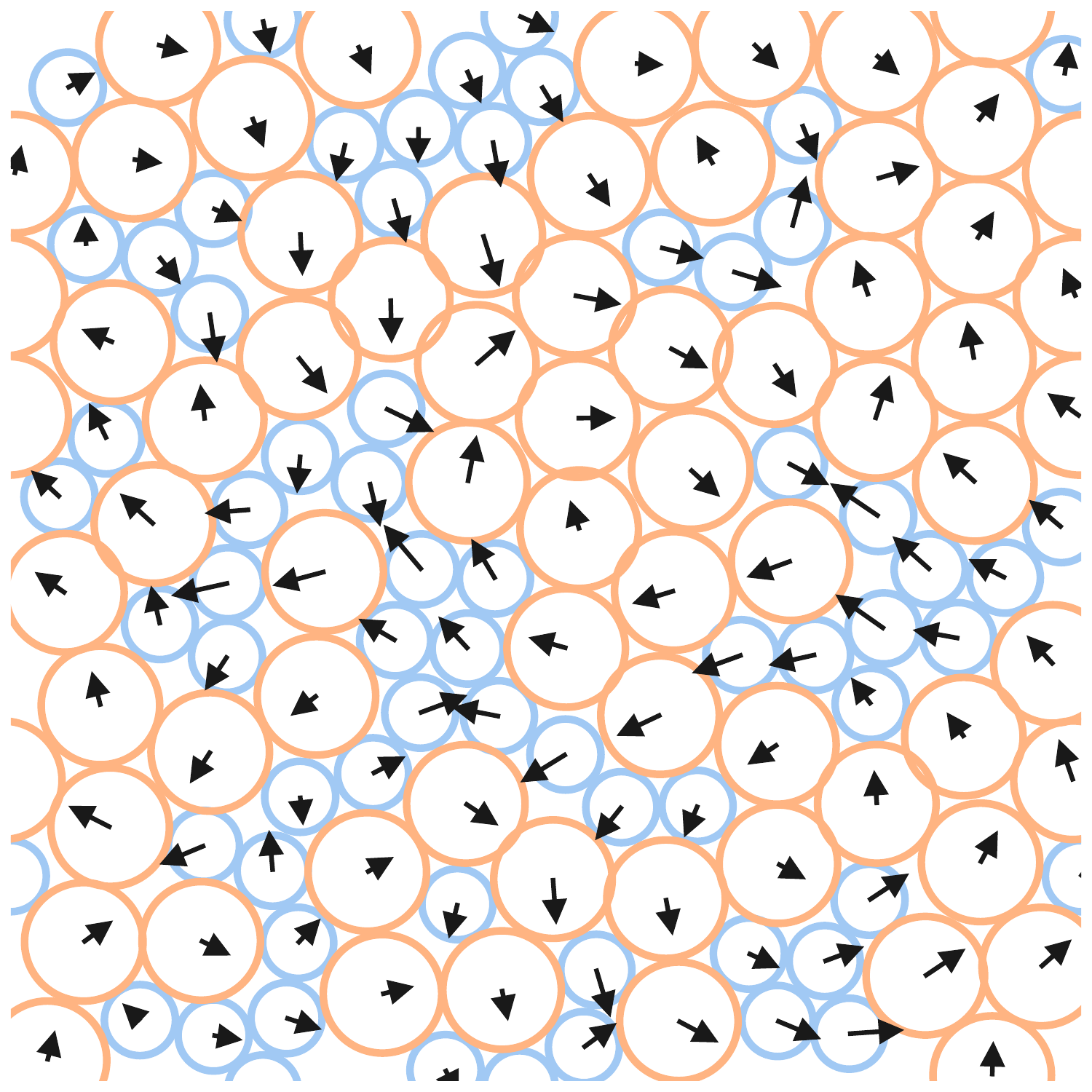}\\
    \caption{Particle positions and position sensitivity vectors, with respect
    to increasing the diameter of the blue particles.}
    \label{fig:jax-md-sensitivity}
\end{wrapfigure}

Many physical simulations require solving optimization problems,
such as energy minimization in molecular~\cite{jax-md} and
continuum~\cite{beatson2020composable} mechanics, structural
optimization~\cite{neural-reparam-2019} and data
assimilation~\cite{invobs_da2021}.  
In this experiment,
we revisit an example from JAX-MD~\cite{jax-md}, the problem of finding energy
minimizing configurations to a system of $k$ packed particles in a
$2$-dimensional box of size $\ell$
\begin{align}
x^\star(\theta)
&=
\argmin_{x \in \mathbb{R}^{k \times 2}}
f(x, \theta) \coloneqq \sum_{i,j} U(x_{i,j}, \theta),
\label{eq:jax-md}
\end{align}
where $x^\star(\theta) \in \RR^{k \times 2}$ are the optimal coordinates of the
$k$ particles, $U(x_{i,j}, \theta)$ is the pairwise potential energy function,
with half the particles at diameter 1 and half at diameter $\theta=0.6$, which
we optimize with a domain-specific optimizer~\cite{structural-relaxation-2006}.
Here we consider sensitivity of particle position with respect to diameter
$\partial x^\star(\theta)$, rather than sensitivity of the total energy from the
original experiment.  Figure~\ref{fig:jax-md-sensitivity} shows results
calculated via forward-mode implicit differentiation (JVP).  Whereas
differentiating the unrolled optimizer happens to work for total energy, here it
typically does not even converge, due to the discontinuous optimization method.

\section{Conclusion}

We proposed in this paper an approach for automatic implicit differentiation,
allowing the user to freely express the optimality conditions of the
optimization problem whose solutions are to be differentiated, 
directly in Python. The applicability of our approach to a large catalog of
optimality conditions is shown in the non-exhaustive list of Table \ref{tab:mapping_summary}, and illustrated by the ease with which
we can solve bi-level and sensitivity analysis problems. 






\clearpage

\appendix

\begin{center}
\huge \bfseries Appendix
\end{center}

\section{More examples of optimality criteria and fixed points}
\label{appendix:more_fixed_points}

To demonstrate the generality of our approach, we describe in this section more
optimality mapping $F$ or fixed point iteration $T$.

\paragraph{Mirror descent fixed point.}

We again consider the case when $x^\star(\theta)$ is implicitly defined as the
solution of \eqref{eq:constrained_pb}.
We now generalize the projected gradient fixed point beyond Euclidean geometry.
Let the Bregman divergence $D_\varphi \colon \dom(\varphi) \times
\relint(\dom(\varphi)) \to \RR_+$ generated by $\varphi$ be defined by
\begin{equation}
D_\varphi(x, y) \coloneqq \varphi(x) - \varphi(y)
- \langle \nabla \varphi(y), x - y \rangle.
\end{equation}
We define the Bregman projection of $y$ onto $\cC(\theta) \subseteq \dom(\varphi)$
by
\begin{equation}
\proj_\cC^\varphi(y, \theta) \coloneqq
\argmin_{x \in \cC(\theta)} D_\varphi(x, \nabla \varphi^*(y)).
\label{eq:Bregman_projection}
\end{equation}
Definition \eqref{eq:Bregman_projection} includes the mirror map $\nabla
\varphi^*(y)$ for convenience. It can be seen as a mapping from $\RR^d$ to
$\dom(\varphi)$, ensuring that \eqref{eq:Bregman_projection} is well-defined.
The mirror descent fixed point mapping is then
\begin{align}
    \hat x &= \nabla \varphi(x) \\
    y &= \hat x - \eta \nabla_1 f(x, \theta) \\
    T(x, \theta) &= \proj_\cC^\varphi(y, \theta).
\label{eq:mirror_descent_fp}
\end{align}
Because $T$ involves the composition of several functions,
manually deriving its JVP/VJP is error prone. This shows that our approach
leveraging autodiff
allows to handle more advanced fixed point mappings.
A common example of $\varphi$ is $\varphi(x) = \langle x, \log x - \ones \rangle$,
where $\dom(\varphi) = \RR_+^d$. In this case, $D_\varphi$ is the
Kullback-Leibler divergence. An advantage of the Kullback-Leibler projection is
that it sometimes easier to compute than the Euclidean projection, as we detail
in Appendix~\ref{appendix:jac_prod}.

\paragraph{Newton fixed point.}

Let $x$ be a root of $G(\cdot, \theta)$, i.e., $G(x, \theta) = 0$.
The fixed point iteration of Newton's method for root-finding is
\begin{equation}
T(x, \theta) = x - \eta [\partial_1 G(x, \theta)]^{-1} G(x, \theta).
\end{equation}
By the chain and product rules, we have
\begin{equation}
\partial_1 T(x, \theta) = I 
- \eta (...) G(x, \theta) 
- \eta [\partial_1 G(x, \theta)]^{-1} \partial_1 G(x, \theta)
= (1 - \eta) I.
\end{equation}
Using \eqref{eq:fixed_point_to_root}, we get
$A = -\partial_1 F(x, \theta) = \eta I$.
Similarly,
\begin{equation}
B = \partial_2 T(x, \theta) = \partial_2 F(x, \theta) = 
- \eta [\partial_1 G(x, \theta)]^{-1} \partial_2 G(x, \theta).
\end{equation}
Newton's method for optimization is obtained by choosing 
$G(x, \theta) = \nabla_1 f(x, \theta)$,
which gives
\begin{equation}
T(x, \theta) = x - \eta [\nabla_1^2 f(x, \theta)]^{-1} \nabla_1 f(x, \theta).
\label{eq:newton_opt_fp}
\end{equation}
It is easy to check that we
recover the same linear system as for the gradient descent fixed point
\eqref{eq:gradient_descent_fp}.
A practical implementation can pre-compute an LU decomposition of $\partial_1
G(x, \theta)$, or a Cholesky decomposition if $\partial_1 G(x, \theta)$
is positive semi-definite.

\paragraph{Proximal block coordinate descent fixed point.}

We now consider the case when $x^\star(\theta)$ is implicitly defined as the
solution
\begin{equation}
x^\star(\theta) \coloneqq \argmin_{x \in \RR^d} f(x, \theta) + 
\sum_{i=1}^m g_i(x_i, \theta),
\end{equation}
where $g_1, \dots, g_m$ are possibly non-smooth functions operating on
subvectors (blocks) $x_1, \dots, x_m$ of $x$. In this case, we can use for $i
\in [m]$ the fixed point
\begin{equation}
x_i = [T(x, \theta)]_i = 
\prox_{\eta_i g_i}(x_i - \eta_i [\nabla_1 f(x, \theta)]_i, \theta),
\label{eq:bcd_fp}
\end{equation}
where $\eta_1, \dots, \eta_m$ are block-wise step sizes. Clearly, when the step
sizes are shared, i.e., $\eta_1 = \dots = \eta_m = \eta$, this fixed point is
equivalent to the proximal gradient fixed point \eqref{eq:proximal_grad_fp}
with $g(x, \theta) = \sum_{i=1}^n g_i(x_i, \theta)$.

\paragraph{Quadratic programming.}

We now show how to use the KKT conditions discussed in
\S\ref{sec:mapping_examples} to differentiate quadratic programs,
recovering Optnet \cite{amos_2017} as a special case.  To give some
intuition, let us start with a simple equality-constrained quadratic program
(QP)
\begin{equation}
\argmin_{z \in \RR^p} 
f(z, \theta) = \frac{1}{2} z^\top Q z + c^\top z
\quad \text{subject to} \quad
H(z, \theta) = E z - d = 0,
\end{equation}
where $Q \in \RR^{p \times p}$, $E \in \RR^{q \times p}$,
$d \in \RR^q$. 
We gather the differentiable parameters as $\theta = (Q, E, c, d)$.
The stationarity and primal feasibility conditions give
\begin{align}
\nabla_1 f(z, \theta) + [\partial_1 H(z, \theta)]^\top \nu
= Q z + c + E^\top \nu &= 0 \\
H(z, \theta) = E z - d &= 0.
\end{align}
In matrix notation, this can be rewritten as
\begin{equation}
\begin{bmatrix}
   Q & E^\top \\
   E & 0
\end{bmatrix}
\begin{bmatrix} 
z \\ 
\nu 
\end{bmatrix}
=
\begin{bmatrix} 
-c \\ 
d
\end{bmatrix}.
\label{eq:eq_const_qp}
\end{equation}
We can write the solution of the linear system
\eqref{eq:eq_const_qp} as the root $x = (z, \nu)$ of a function $F(x,
\theta)$. More generally, the QP can also include inequality constraints
\begin{equation}
\argmin_{z \in \RR^p} 
f(z, \theta) = \frac{1}{2} z^\top Q z + c^\top z
\quad \text{subject to} \quad
H(z, \theta) = E z - d = 0, 
G(z, \theta) = M z - h \le 0.
\label{eq:qp_eq_ineq}
\end{equation}
where $M \in \RR^{r \times p}$ and $h \in \RR^r$.
We gather the differentiable parameters as $\theta = (Q, E, M, c, d, h)$.
The stationarity, primal feasibility and complementary slackness conditions give
\begin{align}
\nabla_1 f(z, \theta) + [\partial_1 H(z, \theta)]^\top \nu
+ [\partial_1 G(z, \theta)]^\top \lambda =
Q z + c + E^\top \nu + M^\top \lambda &= 0 \\
H(z, \theta) = E z - d &= 0 \\
\lambda \circ G(z, \theta) = \diag(\lambda)(Mz-h) &= 0
\end{align}
In matrix notation, this can be written as
\begin{equation}
\begin{bmatrix}
    Q & E^\top & M^\top \\
    E & 0 & 0 \\
    \diag(\lambda)M & 0 & 0
\end{bmatrix}
\begin{bmatrix} 
z \\ 
\nu \\
\lambda
\end{bmatrix}
=
\begin{bmatrix} 
-c \\ 
d \\
\lambda \circ h
\end{bmatrix}
\label{eq:ineq_const_qp}
\end{equation}
While $x = (z, \nu, \lambda)$ is no longer the solution of a linear system, it
is the root of a function $F(x, \theta)$ and therefore fits our framework.
With our framework, no derivation is needed. We simply define
$f$, $H$ and $G$ directly in Python.

\paragraph{Conic programming.}

We now show that the differentiation of conic linear programs
\cite{agrawal_cone,amos_thesis}, at the heart of differentiating through cvxpy
layers \cite{agrawal_2019}, easily fits our framework.
Consider the problem
\begin{equation}
z^\star(\lambda), s^\star(\lambda) =
\argmin_{z \in \RR^p, s \in \RR^m} c^\top z 
\quad \text{subject to} \quad
Ez + s = d, s \in \cK,
\label{eq:linear_conic_primal}
\end{equation}
where $\lambda = (c, E, d)$, 
$E \in \RR^{m \times p}$, $d \in \RR^m$, $c \in \RR^p$ and $\cK \subseteq
\RR^m$ is a cone;
$z$ and $s$ are the primal and slack variables, respectively.
Every convex optimization problem can be reduced to the form
\eqref{eq:linear_conic_primal}.
Let us form the skew-symmetric matrix
\begin{equation}
\theta(\lambda) =
\begin{bmatrix}
    0 & E^\top & c \\
    -E & 0 & d \\
    -c^\top & -d^\top & 0
\end{bmatrix}
\in \RR^{N \times N},
\end{equation}
where $N = p + m + 1$.
Following \cite{agrawal_cone,agrawal_2019,amos_thesis},
we can use the homogeneous self-dual embedding 
to reduce the process of solving \eqref{eq:linear_conic_primal} to finding
a root of the residual map
\begin{equation}
F(x, \theta) 
= \theta \Pi x + \Pi^* x
= ((\theta - I) \Pi + I) x,
\label{eq:residual_map}
\end{equation}
where $\Pi = \proj_{\RR^p \times \cK^* \times \RR_+}$
and $\cK^* \subseteq \RR^m$ is the dual cone. 
The splitting conic solver \cite{splitting_conic}, which is based on 
ADMM, outputs a solution 
$F(x^\star(\theta), \theta) = 0$ which is decomposed as
$x^\star(\theta) = (u^\star(\theta), v^\star(\theta), 
w^\star(\theta))$. We can then recover the optimal solution of
\eqref{eq:linear_conic_primal} using
\begin{equation}
z^\star(\lambda) = u^\star(\theta(\lambda))
\quad \text{and} \quad
s^\star(\lambda) = \proj_{\cK^*}(v^\star(\theta(\lambda))) -
v^\star(\theta(\lambda)).
\end{equation}
The key oracle whose JVP/VJP we need 
is therefore $\Pi$, which is studied in \cite{ali_2017}.
The projection onto a few cones is available in our library
and can be used to express $F$.

\paragraph{Frank-Wolfe.}

We now consider
\begin{equation}
x^\star(\theta) = \argmin_{x \in \cC(\theta) \subset \RR^d} f(x, \theta),
\label{eq:fw_pb}
\end{equation}
where $\cC(\theta)$ is a convex polytope, i.e., it is the convex hull of
vertices $v_1(\theta), \dots, v_m(\theta)$. The Frank-Wolfe algorithm requires a
linear minimization oracle (LMO)
\begin{equation*}
    s \mapsto \argmin_{x \in \cC(\theta)} ~\langle s, x \rangle
\end{equation*}
and is a popular algorithm when this LMO is easier to compute than
the projection onto $\cC(\theta)$. However, since this LMO is piecewise constant,
its Jacobian is null almost everywhere. Inspired by SparseMAP \cite{sparsemap},
which corresponds to the case when $f$ is a quadratic,
we rewrite \eqref{eq:fw_pb} as
\begin{equation}
p^\star(\theta) = \argmin_{p \in \triangle^m}
g(p, \theta) \coloneqq f(V(\theta) p, \theta),
\end{equation}
where $V(\theta)$ is a $d \times m$ matrix gathering the 
vertices $v_1(\theta), \dots, v_m(\theta)$.
We then have $x^\star(\theta) = V(\theta) p^\star(\theta)$.
Since we have reduced \eqref{eq:fw_pb} to minimization over the simplex,
we can use the projected gradient fixed point to obtain
\begin{equation}
T(p^\star(\theta), \theta) = 
\proj_{\triangle^m}(p^\star(\theta) - \nabla_1 g(p^*(\theta), \theta)).
\end{equation}
We can therefore compute the derivatives of $p^\star(\theta)$ by implicit
differentiation and the derivatives of $x^\star(\theta)$ by product rule.
Frank-Wolfe implementations typically maintain the 
convex weights of the vertices, which we use to get an approximation of
$p^\star(\theta)$. Moreover, it is well-known that after $t$
iterations, at most $t$ vertices are visited. We can leverage this sparsity to
solve a smaller linear system. Moreover, in practice,
we only need to compute VJPs of $x^\star(\theta)$.

\section{Code examples}
\label{appendix:code_examples}

\subsection{Code examples for optimality conditions}
\label{appendix:opt_cond_mapping}

Our library provides several reusable optimality condition mappings $F$ or fixed
points $T$. We nevertheless demonstrate the ease of writing some of
them from scratch.

\paragraph{KKT conditions.}

As a more advanced example, we now describe how to implement the KKT conditions
\eqref{eq:kkt_conditions}. The
stationarity, primal feasibility and complementary slackness conditions read
\begin{align}
\nabla_1 f(z, \theta_f) + [\partial_1 G(z, \theta_G)]^\top \lambda + 
[\partial_1 H(z, \theta_H)]^\top \nu = 0 \\
H(z, \theta_H) = 0 \\
\lambda \circ G(z, \theta_G) = 0.
\end{align}
Using \texttt{jax.vjp} to compute vector-Jacobian products, 
this can be implemented as
\begin{figure}[H]
\centering
\fbox{
\includegraphics{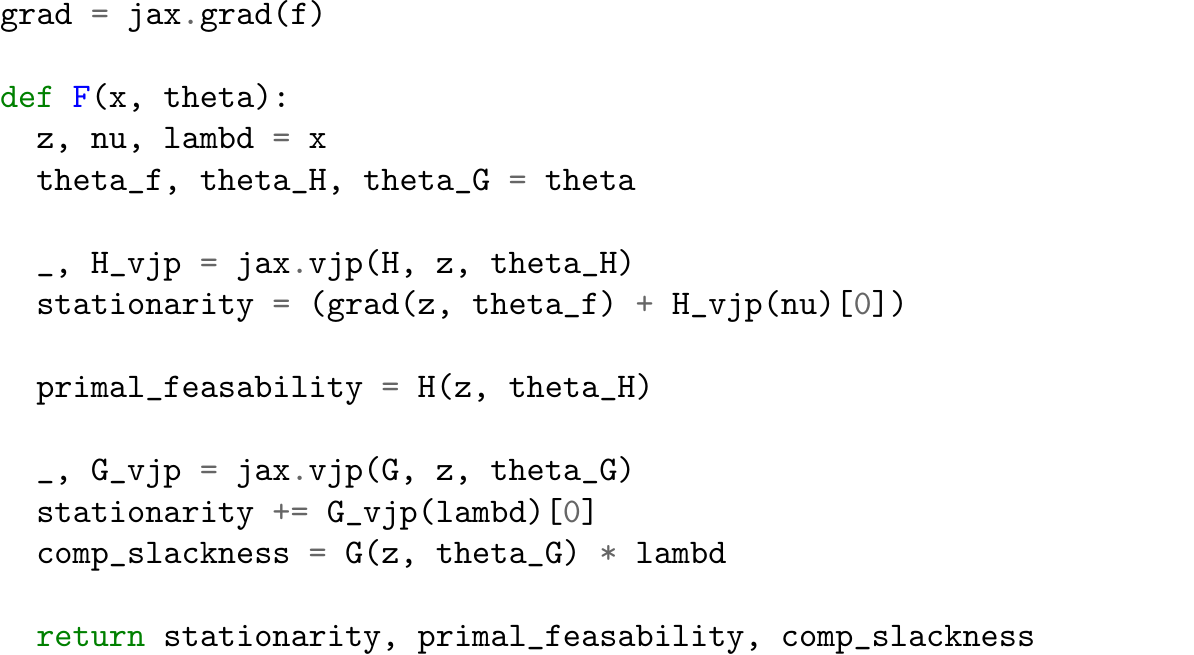}
}
\caption{KKT conditions $F(x, \theta)$}
\label{fig:kkt_code}
\end{figure}
Similar mappings $F$ can be written if the optimization problem
contains only equality constraints or only inequality constraints.




\paragraph{Mirror descent fixed point.}

Letting $\eta=1$ and
denoting $\theta = (\theta_f, \theta_{\proj})$,
the fixed point \eqref{eq:mirror_descent_fp} is
\begin{align}
    \hat x &= \nabla \varphi(x) \\
    y &= \hat x - \nabla_1 f(x, \theta_f) \\
    T(x, \theta) &= \proj_\cC^\varphi(y, \theta_{\proj}).
\end{align}
We can then implement it as follows.
\begin{figure}[H]
\centering
\fbox{
\includegraphics{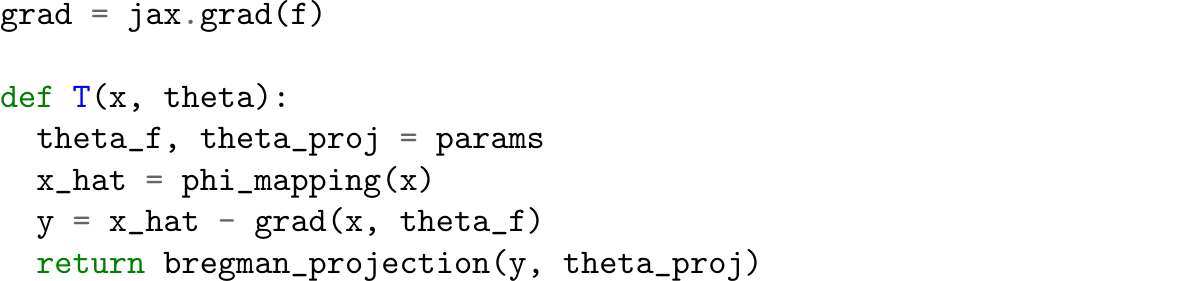}
}
\caption{Mirror descent fixed point $T(x, \theta)$}
\label{fig:mirror_descent_code}
\end{figure}
Although not considered in this example, the mapping $\nabla \varphi$ could also
depend on $\theta$ if necessary.

\subsection{Code examples for experiments}

We now sketch how to implement our experiments using our framework.
In the following, \texttt{jnp} is short for \texttt{jax.numpy}.
In all experiments, we only show how to compute gradients
with the outer objective. We can then use these gradients with gradient-based
solvers to solve the outer objective.

\paragraph{Multiclass SVM experiment.}

~

\begin{figure}[H]
\centering
\fbox{
    \includegraphics[scale=0.9]{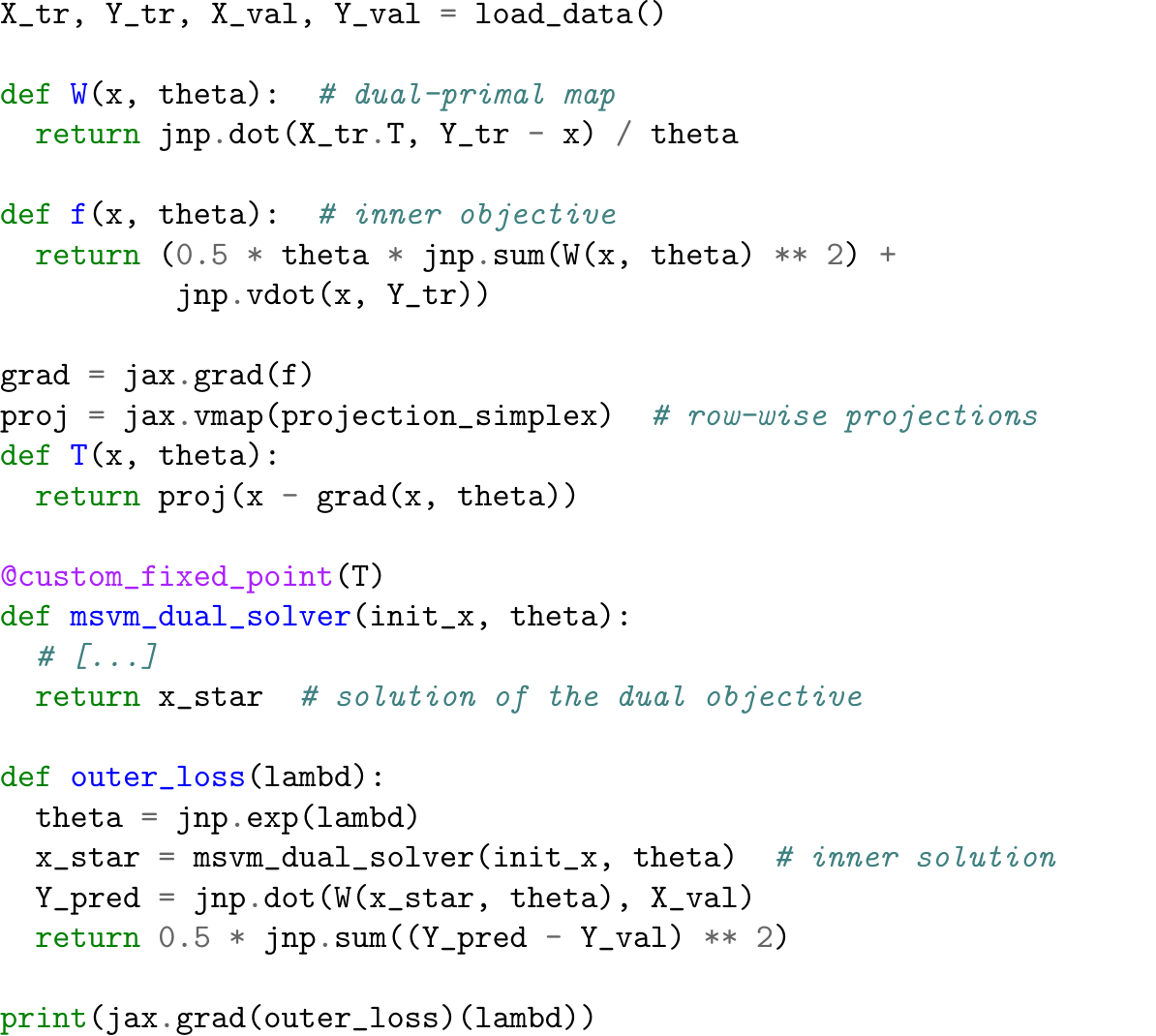}
}
\caption{Code example for the multiclass SVM experiment.}
\label{fig:msvm_code}
\end{figure}

\paragraph{Task-driven dictionary learning experiment.}

~

\begin{figure}[H]
\centering
\fbox{
\includegraphics{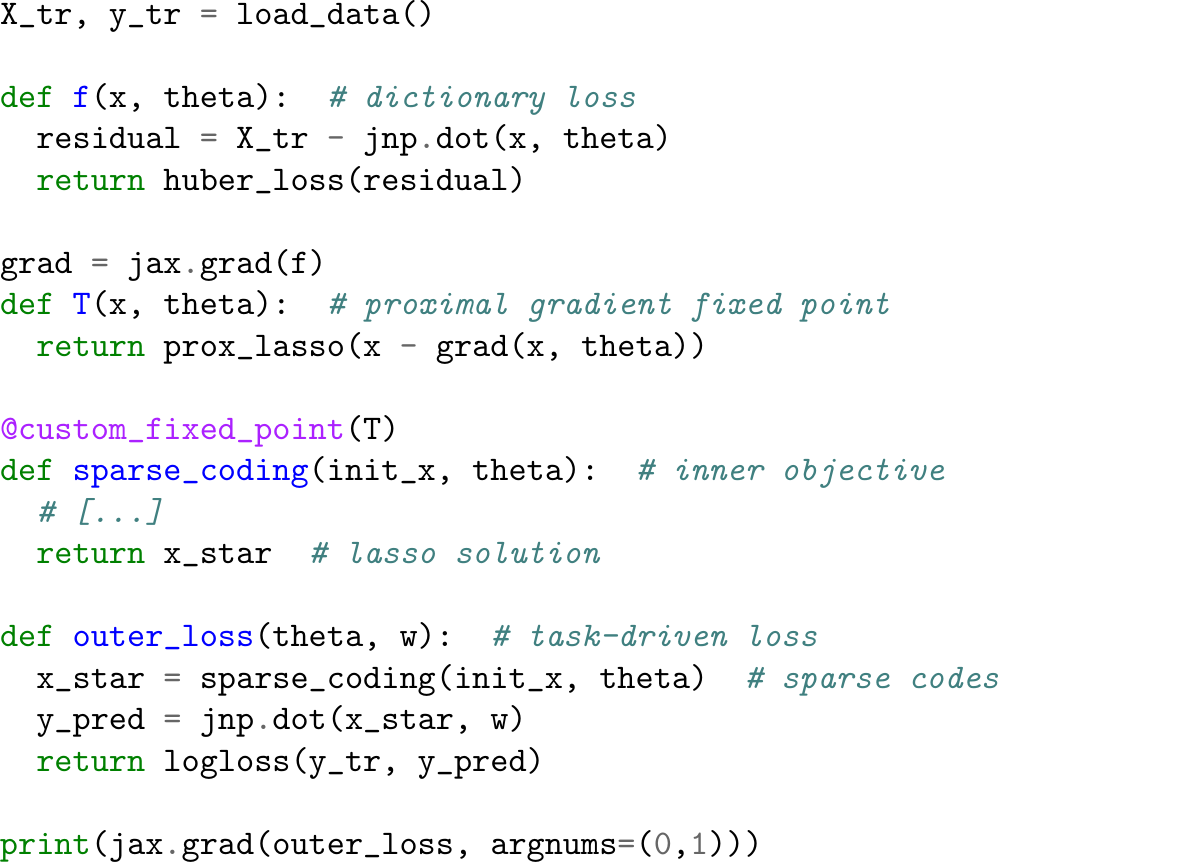}
}
\caption{Code example for the task-driven dictionary learning experiment.}
\label{fig:sparse_coding_code}
\end{figure}

\paragraph{Dataset distillation experiment.}

~

\begin{figure}[H]
\centering
\fbox{
\includegraphics{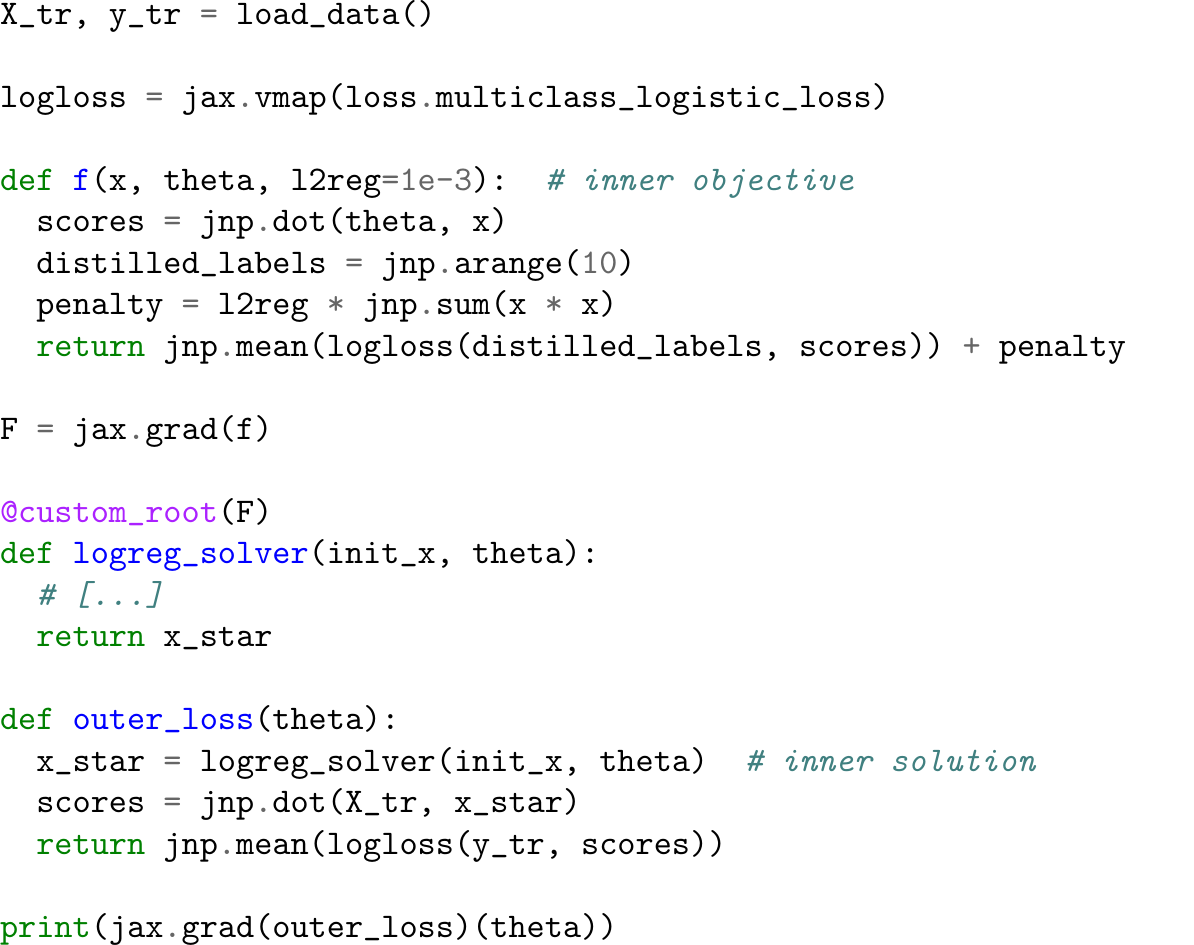}
}
\caption{Code example for the dataset distillation experiment.}
\label{fig:distillation_code}
\end{figure}

\paragraph{Molecular dynamics experiment.}

~

\begin{figure}[H]
\centering
\fbox{
\includegraphics{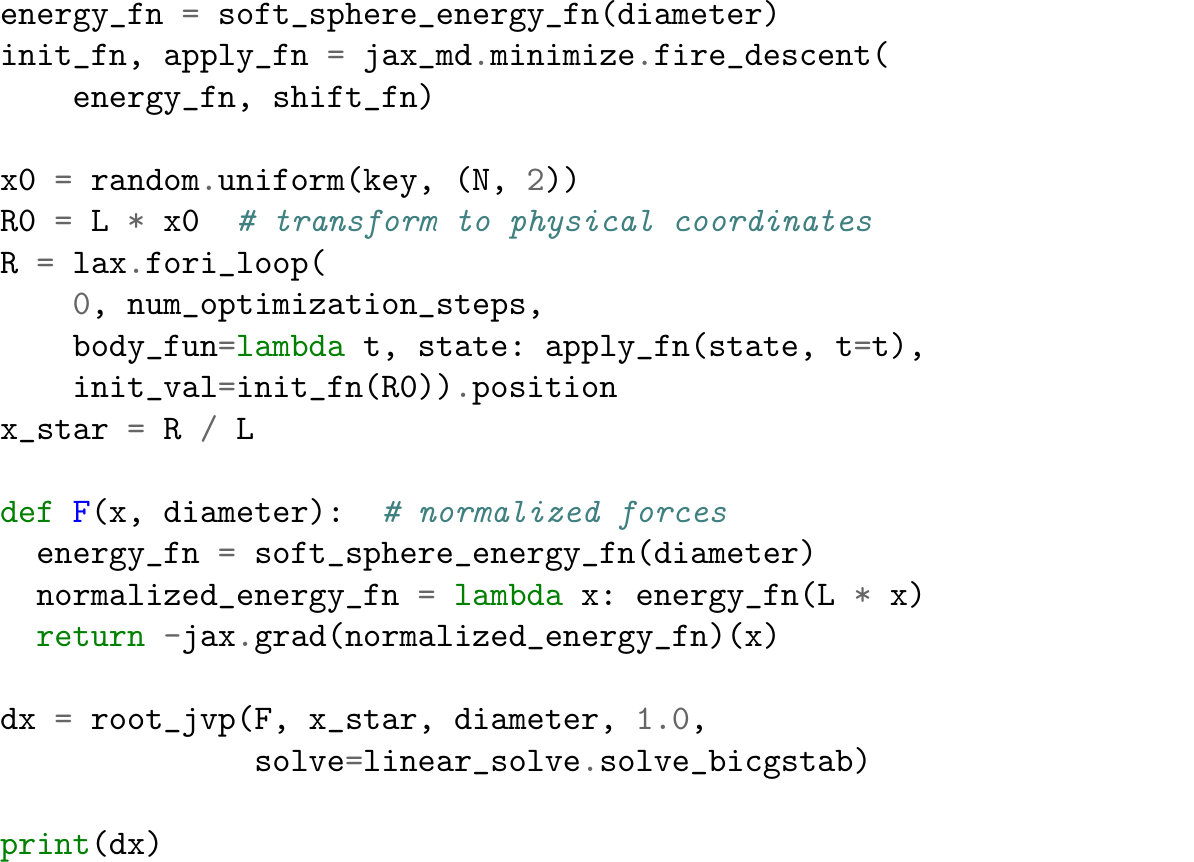}
}
\caption{Code for the molecular dynamics experiment.}
\label{fig:md_code}
\end{figure}

\section{Jacobian products}
\label{appendix:jac_prod}

Our library provides numerous reusable building blocks.
We describe in this section how to compute their Jacobian products.
As a general guideline, whenever a projection enjoys a closed form, we leave the
Jacobian product to the autodiff system.  

\subsection{Jacobian products of projections}
\label{sec:jacobians_proj}

We describe in this section how to compute the Jacobian products of the
projections (in the Euclidean and KL senses) onto various convex sets.  
When the convex set does not depend on any variable, we simply denote
it $\cC$ instead of $\cC(\theta)$.

\paragraph{Non-negative orthant.}

When $\cC$ is the non-negative orthant, $\cC = \RR^d_+$, we obtain
$\proj_\cC(y) = \max(y, 0)$,
where the maximum is evaluated element-wise. This is also known as the ReLu
function. The projection in the KL sense reduces to the exponential function,
$\proj^\varphi_\cC(y) = \exp(y)$.

\paragraph{Box constraints.}

When $\cC(\theta)$ is the box constraints $\cC(\theta) = [\theta_1, \theta_2]^d$ 
with $\theta \in \RR^2$, we obtain
\begin{equation}
\proj_{\cC}(y, \theta) = \text{clip}(y, \theta_1, \theta_2) \coloneqq
\max( \min(y, \theta_2), \theta_1).
\end{equation}
This is trivially extended to support different boxes for each coordinate, 
in which case $\theta \in \RR^{d \times 2}$.

\paragraph{Probability simplex.}

When $\cC$ is the standard probability simplex,
$\cC = \triangle^d$, there is no analytical solution for $\proj_\cC(y)$. 
Nevertheless, the projection can be computed exactly in $O(d)$ expected
time or $O(d \log d)$ worst-case time
\cite{Brucker1984,michelot,duchi,Condat2016}.
The Jacobian is given by $\text{diag}(s) - s s^\top / \|s\|_1$, 
where $s \in \{0,1\}^d$ is a vector indicating the support of $\proj_\cC(y)$
\cite{sparsemax}. The projection in the KL sense, on the other hand, enjoys a
closed form: it reduces to the usual softmax
$\proj^\varphi_\cC(y) = \exp(y) / \sum_{j=1}^d \exp(y_j)$.

\paragraph{Box sections.}

Consider now the Euclidean projection 
$z^\star(\theta) = \proj_{\cC}(y, \theta)$ onto the set 
$\cC(\theta) = \{z \in \RR^d \colon \alpha_i \le z_i \le \beta_i, i \in
[d]; w^\top z = c\}$, where $\theta = (\alpha, \beta, w, c)$.
This projection is a singly-constrained bounded quadratic program.
It is easy to check (see, e.g., \cite{lp_sparsemap}) that an optimal solution
satisfies for all $i \in [d]$
\begin{equation}
z^\star_i(\theta) = [L(x^\star(\theta), \theta)]_i
\coloneqq \text{clip}(w_i x^\star(\theta) + y_i, \alpha_i, \beta_i)
\end{equation}
where $L \colon \RR \times \RR^n \to \RR^d$ is the dual-primal mapping
and $x^\star(\theta) \in \RR$ is the optimal dual variable of the linear
constraint, which should be the root of
\begin{equation}
F(x^\star(\theta), \theta) = L(x^\star(\theta), \theta)^\top w - c.
\end{equation}
The root can be found, e.g., by bisection.
The gradient $\nabla x^\star(\theta)$ is given by
$\nabla x^\star(\theta) = B^\top / A$
and the Jacobian $\partial z^\star(\theta)$ is
obtained by application of the chain rule on $L$.

\paragraph{Norm balls.}

When $\cC(\theta) = \{ x \in \RR^d \colon \|x\| \le \theta\}$, where $\|\cdot\|$
is a norm and $\theta \in \RR_+$, $\proj_\cC(y, \theta)$ becomes the projection
onto a norm ball.  The projection onto the $\ell_1$-ball reduces to a projection
onto the simplex, see, e.g., \cite{duchi}. The projections onto the $\ell_2$ and
$\ell_\infty$ balls enjoy a closed-form, see, e.g., \cite[\S 6.5]{parikh_2014}.
Since they rely on simple composition of functions, all three projections can
therefore be automatically differentiated.

\paragraph{Affine sets.}

When $\cC(\theta) = \{x \in \RR^d \colon A x = b\}$,
where $A \in \RR^{p \times d}$, $b \in \RR^p$ and $\theta = (A, b)$,
we get
\begin{equation}
\proj_{\cC}(y, \theta) = 
y - A^\dagger(Ay - b)
= y - A^\top (AA^\top)^{-1}(A y - b)
\end{equation}
where $A^\dagger$ is the Moore-Penrose pseudoinverse of $A$.
The second equality holds if $p < d$ and $A$ is full rank.
A practical implementation can pre-compute a factorization of the Gram matrix $A
A^\top$. Alternatively, we can also use the KKT conditions.

\paragraph{Hyperplanes and half spaces.}

When $\cC(\theta) = \{x \in \RR^d \colon a^\top x = b\}$, 
where $a \in \RR^d$ and $b \in \RR$ and $\theta = (a, b)$, 
we get
\begin{equation}
\proj_{\cC}(y, \theta) = 
y - \frac{a^\top y - b}{\|a\|_2^2} a.
\end{equation}
When $\cC(\theta) = \{x \in \RR^d \colon a^\top x \le b\}$, we simply replace
$a^\top y - b$ in the numerator by $\max(a^\top y - b, 0)$.

\paragraph{Transportation and Birkhoff polytopes.}

When $\cC(\theta) = \{X \in \RR^{p \times d} \colon X \ones_d = \theta_1, X^\top
\ones_p = \theta_2, X \ge 0\}$, 
the so-called transportation polytope, 
where $\theta_1 \in \triangle^p$ and $\theta_2 \in \triangle^d$ are marginals,
we can compute approximately the projections, both in the Euclidean and KL
senses, by switching to the dual or semi-dual \cite{blondel_2018_ot}. Since both
are unconstrained optimization problems, we can compute their Jacobian product
by implicit differentiation using the gradient descent fixed point. 
An advantage of the KL geometry here is that we can
use Sinkhorn \cite{cuturi_2013_sinkhorn}, which is a GPU-friendly algorithm.
The Birkhoff polytope, the set of doubly stochastic matrices, is obtained by
fixing $\theta_1 = \theta_2 = \ones_d / d$.

\paragraph{Order simplex.}

When $\cC(\theta) = \{x \in \RR^d \colon \theta_1 \ge x_1 \ge x_2 \ge \dots \ge
x_d \ge \theta_2\}$, a so-called order simplex
\cite{grotzinger_1984,blondel_2019_structured}, the projection operations, both
in the Euclidean and KL sense, reduce to isotonic optimization \cite{lim_2016}
and can be solved exactly in $O(d \log d)$ time using the Pool Adjacent
Violators algorithm \cite{best_2000}. The Jacobian of the projections and
efficient product with it are derived in \cite{djolonga_2017,blondel_2020_fast}.

\paragraph{Polyhedra.}

More generally, we can consider polyhedra, i.e., sets of the form
$\cC(\theta) = \{x \in \RR^d \colon A x = b, Cx \le d\}$,
where $A \in \RR^{p \times d}$, $b \in \RR^p$, $C \in \RR^{m \times d}$,
and $d \in \RR^m$. 
There are several ways to differentiate this projection.
The first is to use the KKT conditions as detailed in
\S\ref{sec:mapping_examples}.
A second way is consider the dual of the projection instead, 
which is the maximization of a quadratic function subject to
\textbf{non-negative constraints} \cite[\S 6.2]{parikh_2014}. 
That is, we can reduce the projection on a polyhedron
to a problem of the form \eqref{eq:constrained_pb} with non-negative constraints, 
which we can in turn
implicitly differentiate easily using the projected gradient fixed point,
combined with the projection on the non-negative orthant.
Finally, we apply the dual-primal mapping ,
which enjoys a closed form and is therefore amenable to autodiff,
to obtain the primal projection.

\subsection{Jacobian products of proximity operators}

We provide several proximity operators, including for the lasso (soft
thresholding), elastic net and group lasso (block soft thresholding).
All satisfy closed form expressions and can be differentiated automatically via
autodiff.
For more advanced proximity operators, which do not enjoy a closed form,
recent works have derived their Jacobians. 
The Jacobians of fused lasso and OSCAR were derived in
\cite{niculae_2017}.  For general total variation, the Jacobians were derived in
\cite{vaiter_2013,cherkaoui_2020}.

\section{Jacobian precision proofs}
\label{appendix:proofs}

\begin{proof}[Proof of Theorem~\ref{thm:jacob}]
To simplify notations, we note $A_\star \coloneqq A(x^\star, \theta)$ and $\hat
A \coloneqq
A(\hat x, \theta)$, and similarly for $B$ and $J$. We have by definition of the
Jacobian estimate function $A_\star J_\star = B_\star$ and $\hat A \hat J = \hat
B$. Therefore we have
    \begin{align}
        J(\hat x, \theta) - \partial x^\star (\theta) &= \hat A^{-1} \hat B - A_\star^{-1} B_\star \\
        &= \hat A^{-1} \hat B - \hat A^{-1} B_\star + \hat A^{-1} B_\star - A_\star^{-1} B_\star \\
        &= \hat A^{-1}( \hat B -  B_\star) + (\hat A^{-1} - A_\star^{-1}) B_\star\, .
    \end{align}
For any invertible matrices $M_1, M_2$, it holds that $M_1^{-1} - M_2^{-1} = M_1^{-1} (M_2 - M_1) M_2^{-1}$, so 
\begin{equation}\label{eq:normproduct}
    \|M_2^{-1} - M_2^{-1}\|_{\text{op}} \leq \|M_1^{-1}\|_{\text{op}} \|M_2 -
    M_1\|_{\text{op}} \|M_2^{-1}\|_{\text{op}} \,.
\end{equation}
Therefore,
\begin{equation}\label{eq:lipinv}
    \|\hat A^{-1} - A_\star^{-1}\|_{\text{op}} \le \frac{1}{\alpha^2} \|\hat A -
    A_\star\|_{\text{op}}   \le \frac{\gamma}{\alpha^2}  \|\hat x - x^\star(\theta)\|\, .
\end{equation}
As a consequence, the second term in $J(\hat x, \theta) - \partial x^\star (\theta)$ can be upper bounded and we obtain
    \begin{align}
        \|J(\hat x, \theta) - \partial x^\star (\theta)\| &\le \|\hat A^{-1}( \hat B -  B_\star)\| + \|(\hat A^{-1} - A_\star^{-1}) B_\star\| \\
        &\le \|\hat A^{-1} \|_{\text{op}}\| \hat B -  B_\star\| +\frac{\gamma}{\alpha^2} \|\hat x - x^\star(\theta)\|\ \|B_\star\|  \, ,
    \end{align}
which yields the desired result.
\end{proof}

\begin{corollary}[Jacobian precision for gradient descent fixed point]
\label{cor:precision-gd}
    Let $f$ be such that $f(\cdot, \theta)$ is twice differentiable and
    $\alpha$-strongly convex and $\nabla_1^2f(\cdot, \theta)$ is
    $\gamma$-Lipschitz (in the operator norm) and $\partial_2\nabla_1 f(x, \theta)$ is $\beta$-Lipschitz and bounded in norm by $R$. The estimated Jacobian evaluated at $\hat x$ is then given by 
 \[   
 J(\hat x, \theta) = - (\nabla^2_1f(\hat x, \theta))^{-1} \partial_2\nabla_{1} f(\hat x, \theta)\, .
 \]
 For all $\theta \in \RR^n$, and any $\hat{x}$ estimating $x^\star(\theta)$, we have the following bound for the approximation error of the estimated Jacobian
\[
\|J(\hat x, \theta) - \partial x^\star (\theta)\| \le \left(\frac{\beta}{\alpha} + \frac{\gamma R}{\alpha^2}\right) \|\hat x - x^\star(\theta)\|\, .
\]
\end{corollary}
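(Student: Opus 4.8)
The plan is to obtain Corollary~\ref{cor:precision-gd} as a direct specialization of Theorem~\ref{thm:jacob} applied to the stationarity mapping $F(x, \theta) = \nabla_1 f(x, \theta)$, which encodes the first-order optimality condition of $x^\star(\theta) = \argmin_x f(x, \theta)$. First I would record the induced quantities: $A(x, \theta) = -\partial_1 F(x, \theta) = -\nabla_1^2 f(x, \theta)$ and $B(x, \theta) = \partial_2 F(x, \theta) = \partial_2 \nabla_1 f(x, \theta)$. By Definition~\ref{DEF:jac-est}, the Jacobian estimate $J(\hat x, \theta)$ solves $A(\hat x, \theta) J = B(\hat x, \theta)$, i.e. $-\nabla_1^2 f(\hat x, \theta)\, J(\hat x, \theta) = \partial_2 \nabla_1 f(\hat x, \theta)$; since $\alpha$-strong convexity makes $\nabla_1^2 f(\hat x, \theta)$ invertible, this rearranges to the stated closed form $J(\hat x, \theta) = -(\nabla_1^2 f(\hat x, \theta))^{-1} \partial_2 \nabla_1 f(\hat x, \theta)$. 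I would also note (optionally) that the same $J$ arises from the gradient-descent fixed point $T(x,\theta) = x - \eta \nabla_1 f(x,\theta)$, for which $A = \eta \nabla_1^2 f$ and $B = -\eta \partial_2 \nabla_1 f$, the step size $\eta$ cancelling in the linear system.

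Next I would verify the hypotheses of Theorem~\ref{thm:jacob} with the constants $\alpha, \beta, \gamma, R$ from the statement and with $\varepsilon$ taken arbitrarily large. Continuous differentiability of $F$ follows because $\gamma$-Lipschitzness of $\nabla_1^2 f(\cdot, \theta)$ gives continuity of $\partial_1 F$ and $\beta$-Lipschitzness of $\partial_2 \nabla_1 f$ gives continuity of $\partial_2 F$. The bound $\|A(x,\theta) v\| \ge \alpha \|v\|$ is exactly $\nabla_1^2 f(x, \theta) \succeq \alpha I$ (combined with Cauchy--Schwarz), i.e. $\alpha$-strong convexity. The operator-norm Lipschitz bound on $A$ is the assumed $\gamma$-Lipschitzness of the Hessian. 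The bound $\|B(x^\star(\theta), \theta)\| \le R$ and the $\beta$-Lipschitz bound on $B$ are the assumed boundedness and Lipschitzness of $\partial_2 \nabla_1 f$ in its first argument. Since all of these hold for every $x$ rather than only near $x^\star(\theta)$, the hypotheses are met for any radius, so the conclusion holds for every $\hat x$.

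Finally, invoking Theorem~\ref{thm:jacob} directly gives $\|J(\hat x, \theta) - \partial x^\star(\theta)\| \le (\beta \alpha^{-1} + \gamma R \alpha^{-2}) \|\hat x - x^\star(\theta)\|$, which is the claimed inequality.

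There is no serious obstacle here; the corollary is essentially a bookkeeping instance of the main theorem. The only points that need care are (i) keeping the sign of $A = -\nabla_1^2 f$ straight so that the closed form for $J$ comes out correctly — equivalently, checking that the $\eta$ appearing in the fixed-point formulation is immaterial — and (ii) observing that because the strong convexity and Lipschitz assumptions on $f$ are global, the local radius $\varepsilon$ in Theorem~\ref{thm:jacob} may be taken to be $+\infty$, so that the bound is valid for every $\hat x$ and not merely those close to $x^\star(\theta)$.
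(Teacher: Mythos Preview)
Your proposal is correct and follows exactly the paper's approach: the paper's proof is the single sentence ``This follows from Theorem~\ref{thm:jacob}, applied to this specific $A(x, \theta)$ and $B(x, \theta)$,'' and you have simply spelled out the routine verification of the hypotheses. Your handling of the sign of $A$ and the observation that the global assumptions let $\varepsilon$ be arbitrary are both fine and match the intended argument.
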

\begin{proof}[Proof of Corollary~\ref{cor:precision-gd}]
This follows from Theorem~\ref{thm:jacob}, applied to this specific $A(x, \theta)$ and $B(x, \theta)$. 
\end{proof}

For proximal gradient descent, where 
$T(x, \theta) = \prox_{\eta g}(x - \eta \nabla_1 f(x, \theta), \theta)$,
this yields
\begin{align*}
A(x, \theta) &= I - \partial_1 T(x, \theta) = 
I - 
\partial_1 \prox_{\eta g}(x - \eta \nabla_1 f(x, \theta), \theta )
(I - \eta \nabla_1^2 f(x, \theta)) 
\\
B(x, \theta) &= 
\partial_2 \prox_{\eta g}(x - \eta \nabla_1 f(x, \theta) , \theta) -
\eta 
\partial_1 \prox_{\eta g}(x - \eta \nabla_1 f(x, \theta) , \theta)
\partial_2 \nabla_1 f(x, \theta) 
\, .
\end{align*}

We now focus in the case of proximal gradient descent on an objective
$f(x,\theta) + g(x)$, where $g$ is smooth and does not depend
on $\theta$. This is the case in our experiments in \S\ref{sec:sparse-coding}.
Recent work also exploits local smoothness of solutions to derive similar bounds
\cite[Theorem 13]{bertrand_2021_journal}

\begin{corollary}[Jacobian precision for proximal gradient descent fixed point]
\label{cor:precision-prox}
    Let $f$ be such that $f(\cdot, \theta)$ is twice differentiable and
    $\alpha$-strongly convex and $\nabla_1^2f(\cdot, \theta)$ is
    $\gamma$-Lipschitz (in the operator norm) and $\partial_2\nabla_1 f(x,
    \theta)$ is $\beta$-Lipschitz and bounded in norm by $R$. Let $g: \RR^d \to
    \RR$ be a twice-differentiable $\mu$-strongly convex (with special case $\mu =0$ being only convex), for which the function
    $\Gamma_\eta(x, \theta) =  \nabla^2 g(\prox_{\eta g}(x - \eta \nabla_1 f(x,
    \theta))$ is $\kappa_\eta$-Lipschitz in it first argument. The estimated Jacobian
    evaluated at $\hat x$ is then given by 
 \[   
 J(\hat x, \theta) = - (\nabla^2_1f(\hat x,
 \theta) + \Gamma_\eta(\hat x, \theta))^{-1} \partial_2\nabla_{1} f(\hat x, \theta)\, .
 \]
 For all $\theta \in \RR^n$, and any $\hat{x}$ estimating $x^\star(\theta)$, we have the following bound for the approximation error of the estimated Jacobian
\[
\|J(\hat x, \theta) - \partial x^\star (\theta)\| \le \left(\frac{\beta + \kappa_\eta}{\alpha + \mu} + \frac{\gamma R}{(\alpha + \mu)^2}\right) \|\hat x - x^\star(\theta)\|\, .
\]
\end{corollary}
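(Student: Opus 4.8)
The plan is to reduce Corollary~\ref{cor:precision-prox} to Theorem~\ref{thm:jacob}, exactly as Corollary~\ref{cor:precision-gd} does for gradient descent, after first rewriting the implicit linear system \eqref{eq:linear_system_root} for the proximal gradient fixed point in a convenient ``canonical'' form. First I would recall that for a twice-differentiable convex $g$ one has $\prox_{\eta g} = (I + \eta \nabla g)^{-1}$, and since $\nabla^2 g \succeq \mu I \succeq 0$ the map $I + \eta \nabla g$ has everywhere-invertible Jacobian $I + \eta \nabla^2 g$; by the inverse function theorem $\prox_{\eta g}$ is then continuously differentiable with $\partial \prox_{\eta g}(y) = (I + \eta \nabla^2 g(\prox_{\eta g}(y)))^{-1}$. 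Substituting this into $A = I - \partial_1 T$ and $B = \partial_2 T$ for $T(x,\theta) = \prox_{\eta g}(x - \eta \nabla_1 f(x,\theta))$ (with $g$ independent of $\theta$, so the $\partial_2 \prox$ term drops out) gives $A(x,\theta) = I - (I + \eta \Gamma_\eta(x,\theta))^{-1}(I - \eta \nabla_1^2 f(x,\theta))$ and $B(x,\theta) = -\eta (I + \eta \Gamma_\eta(x,\theta))^{-1} \partial_2 \nabla_1 f(x,\theta)$, recovering the displayed formulas just before the corollary.

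The key observation is that the solution $J$ of $A J = B$ is unchanged under left-multiplication of both sides by the invertible matrix $\eta^{-1}(I + \eta \Gamma_\eta)$, which collapses the system to $(\nabla_1^2 f(x,\theta) + \Gamma_\eta(x,\theta))\, J = -\partial_2 \nabla_1 f(x,\theta)$. This simultaneously proves the stated closed form $J(\hat x, \theta) = -(\nabla_1^2 f(\hat x,\theta) + \Gamma_\eta(\hat x,\theta))^{-1} \partial_2 \nabla_1 f(\hat x,\theta)$ and exhibits a ``canonical'' pair $\bar A \coloneqq \nabla_1^2 f + \Gamma_\eta$, $\bar B \coloneqq -\partial_2 \nabla_1 f$ — which produces the same Jacobian estimate as $(A,B)$ — to which Theorem~\ref{thm:jacob} applies. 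Then I would verify its hypotheses for $(\bar A, \bar B)$: $\alpha$-strong convexity of $f$ gives $\nabla_1^2 f \succeq \alpha I$ and $\mu$-strong convexity of $g$ gives $\Gamma_\eta \succeq \mu I$, so $\bar A$ is symmetric positive definite with smallest eigenvalue $\ge \alpha + \mu$, hence $\|\bar A v\| \ge (\alpha + \mu)\|v\|$; the triangle inequality together with the $\gamma$-Lipschitzness of $\nabla_1^2 f$ and the assumed $\kappa_\eta$-Lipschitzness of $\Gamma_\eta$ controls $\|\bar A(x,\theta) - \bar A(x^\star(\theta),\theta)\|_{\textnormal{op}}$; and $\bar B = -\partial_2 \nabla_1 f$ is $\beta$-Lipschitz and bounded by $R$ at $x^\star(\theta)$ by assumption. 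Feeding these effective constants (with $\alpha$ replaced by $\alpha+\mu$) into Theorem~\ref{thm:jacob} and collecting terms then gives the announced bound; the special case $\mu = 0$ goes through verbatim using only $\Gamma_\eta \succeq 0$.

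I expect the main obstacle to be bookkeeping rather than anything deep. The routine-but-delicate parts are (i) justifying the closed form for $\partial \prox_{\eta g}$ and making sure every inverse that appears — $I + \eta \Gamma_\eta(x,\theta)$ at an arbitrary $x$ near $x^\star(\theta)$, and $\bar A$ itself — is well-defined, which is precisely where convexity of $g$ and strong convexity of $f$ enter; and (ii) checking that the well-conditioning bound $\|\bar A(x,\theta) v\| \ge (\alpha+\mu)\|v\|$ holds \emph{uniformly} over all $x$ with $\|x - x^\star(\theta)\| \le \varepsilon$, not merely at $x^\star(\theta)$, so that the conclusion of Theorem~\ref{thm:jacob} can be evaluated at the approximate solution $\hat x$. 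This uniformity is immediate here because strong convexity of $f$ and $g$ is a global property, which is exactly why the corollary can be stated for any $\hat x$ in the neighborhood rather than only to first order in $\varepsilon$.
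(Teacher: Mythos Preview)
Your proposal is correct and follows essentially the same route as the paper: derive $\partial\prox_{\eta g}(y) = (I + \eta \nabla^2 g(\prox_{\eta g}(y)))^{-1}$ from $\prox_{\eta g} = (I+\eta\nabla g)^{-1}$, cancel the common factor $\eta(I+\eta\Gamma_\eta)^{-1}$ in $A$ and $B$ to obtain the canonical pair $\tilde A = \nabla_1^2 f + \Gamma_\eta$, $\tilde B = -\partial_2\nabla_1 f$, and then invoke Theorem~\ref{thm:jacob} with the constants $(\alpha+\mu,\gamma+\kappa_\eta,\beta,R)$. One small caveat you share with the paper: plugging those constants into Theorem~\ref{thm:jacob} literally yields $\beta/(\alpha+\mu) + (\gamma+\kappa_\eta)R/(\alpha+\mu)^2$, which does not match the displayed bound $(\beta+\kappa_\eta)/(\alpha+\mu) + \gamma R/(\alpha+\mu)^2$ term-for-term, so ``collecting terms'' does not quite land on the announced expression---but this discrepancy is in the paper's statement/proof as well, not in your strategy.
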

\begin{proof}[Proof of Corollary~\ref{cor:precision-prox}]
First, let us note that $\prox_{\eta g} (y, \theta)$ does not depend on $\theta$,  since $g$ itself does not depend on $\theta$, and is therefore equal to classical proximity operator of $\eta g$ which, with a slight overload of notations, we denote as $\prox_{\eta g}(y)$ (with a single argument). In other words,
\begin{equation*}
    \begin{cases}
        \prox_{\eta g} (y, \theta) &= \prox_{\eta g} (y)\,,\\
        \partial_1 \prox_{\eta g} (y, \theta) &= \partial\prox_{\eta g} (y)\,,\\
        \partial_2 \prox_{\eta g} (y, \theta) &= 0\,.
    \end{cases}
\end{equation*}

Regarding the first claim (expression of the estimated Jacobian evaluated at $\hat x$), we first have that $\prox_{\eta g} (y)$
 is the solution to $(x' - y) + \eta \nabla g(x') = 0$ in $x'$ - by first-order condition for a smooth convex function. We therefore have that
\begin{align*}
\prox_{\eta g}(y) &= (I + \eta \nabla g)^{-1} (y)\\
\partial \prox_{\eta g}(y) &= (I_d + \eta \nabla^2 g(\prox_{\eta g}(y)))^{-1}\, ,
\end{align*}
the first $I$ and inverse being functional identity and inverse, and the second $I_d$ and inverse being in the matrix sense, by inverse rule for Jacobians $\partial h(z) = [\partial h^{-1}(h(z))]^{-1}$ (applied to the prox).

As a consequence, we have, for $\Gamma_\eta(x, \theta) =  \nabla^2 g(\prox_{\eta g}(x - \eta \nabla_1 f(x, \theta))$ that 
\begin{align*}
A(x, \theta) &= I_d - (I_d + \eta \Gamma_\eta(x, \theta))^{-1}(I_d - \eta \nabla_1^2 f(x, \theta)) \\
&= (I_d + \eta \Gamma_\eta(x, \theta))^{-1} [I_d + \eta \Gamma_\eta(x, \theta) - (I_d - \eta \nabla_1^2 f(x, \theta))] \\
&= \eta (I_d + \eta \Gamma_\eta(x, \theta))^{-1} ( \nabla^2_1 f(x, \theta) +  \Gamma_\eta(x, \theta)) \\
B(x, \theta) &= - \eta  (I_d + \eta \Gamma_\eta(x, \theta))^{-1} \partial_2 \nabla_1 f(x, \theta)\, .
\end{align*}
As a consequence, for all $x \in \RR^d$, we have that
\[
J(x, \theta) = - ( \nabla^2_1 f(x, \theta) +
\Gamma_\eta(x, \theta))^{-1} \partial_2 \nabla_1 f(x, \theta)\, .
\]
In the following, we modify slightly the notation of both $A$ and $B$, writing
\begin{align*}
\tilde A(x, \theta) &= \nabla^2_1 f(x, \theta) + \Gamma_\eta(x, \theta) \\
\tilde B(x, \theta) &= - \partial_2 \nabla_1 f(x, \theta)\, .
\end{align*}

With the current hypotheses, following along the proof of Theorem ~\ref{thm:jacob}, we have that $\tilde A$ is $(\alpha + \mu)$ well-conditioned, and $(\gamma + \kappa_\eta)$-Lipschitz in its first argument, and $\tilde B$ is $\beta$-Lipschitz in its first argument and bounded in norm by $R$. The same reasoning yields
\[
\|J(\hat x, \theta) - \partial x^\star (\theta)\| \le \left(\frac{\beta + \kappa_\eta}{\alpha + \mu} + \frac{\gamma R}{(\alpha + \mu)^2}\right) \|\hat x - x^\star(\theta)\|\, .
\]
\end{proof}

\section{The Lasso case}\label{sec:lasso}
Our approach to differentiate the solution of a root equation $F(x,\theta)=0$ is valid as long as the smooth implicit function theorem holds, namely, as long as $F$ is continuously differentiable near a solution $(x_0,\theta_0)$ and $\nabla_1 F(x_0,\theta_0)$ is invertible. While the first assumption is easy to check when $F$ is continuously differentiable \emph{everywhere}, it does not always hold when this is not the case, e.g., when $F$ involves the proximity operator of a non-smooth function. In such cases, one may therefore have to study theoretically the properties of the function $F$ near the solutions $(x(\theta),\theta)$ to justify differentiation using the smooth implicit function theorem. Here, we develop such an analysis to justify the use of our approach to differentiate the solution of a Lasso regression problem with respect to the regularization parameter. We note that the smooth implicit function theorem has already been used for this problem \cite{bertrand_2020_implicit,bertrand_2021_journal}; here we justify why it is a valid approach, even though $F$ itself is not continuously differentiable everywhere. More precisely, we consider the Lasso problem:
\begin{equation}\label{eq:lasso}
\forall \theta\in\RR\,,\quad x^\star(\theta) = \argmin_{x\in\RR^d} \frac{1}{2}\|\dataMatrix x - b\|^2_2 + e^\theta \|x\|_1\,,
\end{equation}
where $\dataMatrix\in\RR^{m\times d}$ and $b\in\RR^m$. Note that we parameterize the regularization parameter as $e^\theta$ to ensure that it remains strictly positive for $\theta\in\RR$, but the analysis below does not depend on this particular choice. This is a typical non-smooth optimization problem where we may want to use a proximal gradient fixed point equation \eqref{eq:proj_grad_fp} to differentiate the solution. In this case we have $f(x,\theta)=(1/2)\|\dataMatrix x-b\|^2_2$, hence $\nabla_1f(x,\theta) = \dataMatrix^\top(\dataMatrix x - b)$, and $g(x,\theta)=e^\theta\|x\|_1$, hence $\prox_{\eta g}(y,\theta) = \ST(y,\eta e^\theta)$ where $\ST:\RR^d \times \RR \rightarrow \RR^d$ is the soft-thresholding operator: $\ST(a,b)_i = \sign(a_i)\times \max(|a_i|-b,0)$. The root equation that characterizes the solution is therefore, for any $\eta>0$:
\begin{equation}
F_\eta(x,\theta) = x - \ST\left(x-\eta \dataMatrix^\top(\dataMatrix x - b), \eta e^\theta\right) = 0\,.
\label{eq:Flasso}
\end{equation}
It is well-known that, under mild assumptions, \eqref{eq:lasso} has a unique solution for any $\theta\in\RR$, and that the solution path $\{x^\star(\theta):\theta\in\RR\}$ is continuous and piecewise linear, with a finite number of non-differentiable points (often called ``kinks'') \cite{Tibshirani2013lasso,Mairal12Complexity}. Since the function $\theta \mapsto x^\star(\theta)$ is not differentiable at the kinks, the smooth implicit function theorem using \eqref{eq:Flasso} is not applicable at those points (as any other methods to compute the Jacobian of $x^\star(\theta)$, such as unrolling). Interestingly, though, the following result shows that, under mild assumptions, the smooth implicit function theorem using \eqref{eq:Flasso} is valid on \emph{all} other points of the solution path, thus justifying the use of our approach to compute the Jacobian of $x^\star(\theta)$ whenever it exists. Note that we state this theorem under some assumption on the matrix $\dataMatrix$ that is sufficient to ensure that the solution to the lasso is unique \cite[Lemma 4]{Tibshirani2013lasso}, but that weaker assumptions could be possible (see proof).
\begin{theorem}\label{thm:lasso}
If the entries of $\dataMatrix$ are drawn from a continuous probability distribution on $\RR^{m\times d}$, then the smooth implicit function theorem using the root equation \eqref{eq:Flasso} holds at any point $(x^\star(\theta),\theta)$ that is not a kink of the solution path, with probability one.
\end{theorem}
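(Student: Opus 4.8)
The plan is to verify, at a point $(x^\star(\theta_0),\theta_0)$ of the solution path that is not a kink (i.e.\ not one of the finitely many non-differentiable points of $\theta\mapsto x^\star(\theta)$), the two hypotheses of the smooth implicit function theorem for the map $F_\eta$ of \eqref{eq:Flasso}: first, that $F_\eta$ is continuously differentiable in a neighborhood of $(x^\star(\theta_0),\theta_0)$, and second, that $\partial_1 F_\eta(x^\star(\theta_0),\theta_0)$ is invertible. Both will follow from the structure of the Lasso solution together with the genericity of $\dataMatrix$.

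First I would fix the probability-one event. Since the entries of $\dataMatrix$ are drawn from a continuous distribution, with probability one its columns are in general position, so by \cite[Lemma~4]{Tibshirani2013lasso} problem \eqref{eq:lasso} has a unique solution $x^\star(\theta)$ for every $\theta\in\RR$, supported on a set $S(\theta)\coloneqq\{i : x^\star_i(\theta)\neq 0\}$ of size at most $m$ on which $\dataMatrix$ restricts to a full-column-rank matrix $\dataMatrix_{S(\theta)}$; moreover, by \cite{Tibshirani2013lasso,Mairal12Complexity}, $\theta\mapsto x^\star(\theta)$ is continuous and piecewise affine in $e^\theta$ with finitely many kinks, and in general position $S(\theta)$ coincides with the equicorrelation set and is locally constant away from the kinks. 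I would work on this event and fix a non-kink $\theta_0$, writing $x_0=x^\star(\theta_0)$, $S=S(\theta_0)$, and $\lambda_0=\eta e^{\theta_0}$.

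The core of the argument is to control the soft-thresholding argument $a_0\coloneqq x_0-\eta\dataMatrix^\top(\dataMatrix x_0-b)$ componentwise. The first-order optimality condition $\dataMatrix^\top(\dataMatrix x_0-b)+e^{\theta_0}s_0=0$ with $s_0\in\partial\|x_0\|_1$ gives $(a_0)_i=(x_0)_i+\lambda_0\sign((x_0)_i)$ for $i\in S$, so $|(a_0)_i|=|(x_0)_i|+\lambda_0>\lambda_0$, and $(a_0)_i=\lambda_0(s_0)_i$ for $i\notin S$. The key step, which I expect to be the main obstacle, is to establish strict complementarity at a non-kink point, i.e.\ $|(s_0)_i|<1$ for all $i\notin S$: on the interior of a linear segment of the path the equicorrelation set is constant and, in general position, equals $S$, so the inactive correlations stay strictly below the threshold, whereas an inactive correlation reaching the threshold or an active coordinate vanishing occurs only at the finitely many kinks. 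This is where the precise meaning of ``kink'' and the general-position hypothesis must be made to interact cleanly; everything around it is bookkeeping. Granting strict complementarity, $|(a_0)_i|<\lambda_0$ strictly for $i\notin S$, so in a neighborhood of $(x_0,\theta_0)$ the $i$-th component of $\ST(\cdot,\eta e^\theta)$ vanishes identically for $i\notin S$ and equals the smooth map $(x,\theta)\mapsto [x-\eta\dataMatrix^\top(\dataMatrix x-b)]_i-\eta e^\theta\sign((a_0)_i)$ for $i\in S$; hence $F_\eta$ is $C^1$ (indeed $C^\infty$) near $(x_0,\theta_0)$, giving the first hypothesis.

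For invertibility I would differentiate directly: with $a(x)=x-\eta\dataMatrix^\top(\dataMatrix x-b)$ one has $\partial_x a=I-\eta\dataMatrix^\top\dataMatrix$, and using $\partial_a\ST(a,t)=\diag(\mathbf 1[\,|a_i|>t\,])$ together with the strict inequalities above, $\partial_a\ST(a_0,\lambda_0)=D\coloneqq\diag(\mathbf 1_S)$. Therefore $\partial_1 F_\eta(x_0,\theta_0)=I-D\,(I-\eta\dataMatrix^\top\dataMatrix)$, which, after ordering coordinates as $(S,S^c)$, is block upper triangular with diagonal blocks $\eta\,\dataMatrix_S^\top\dataMatrix_S$ and $I_{S^c}$; hence $\det\partial_1 F_\eta(x_0,\theta_0)=\eta^{|S|}\det(\dataMatrix_S^\top\dataMatrix_S)\neq 0$ because $\dataMatrix_S$ has full column rank on the general-position event. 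This yields the second hypothesis and completes the plan; the only remaining work is to convert the ``away from kinks'' genericity statements into the precise componentwise strict inequalities invoked above.
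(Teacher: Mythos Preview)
Your proposal is correct and follows essentially the same route as the paper: both reduce to showing the soft-thresholding argument avoids the non-differentiability set $\cS$ via strict complementarity at non-kink points, and both rely on general position of $\dataMatrix$. The paper fills in the step you flag as the ``main obstacle'' by using the explicit formula $x^\star(\theta)_{\cE(\theta)} = (\dataMatrix_{\cE(\theta)}^\top \dataMatrix_{\cE(\theta)})^{-1}(\dataMatrix_{\cE(\theta)}^\top b - e^\theta s(\theta)_{\cE(\theta)})$ together with the additional genericity fact that $(\dataMatrix_\cE^\top \dataMatrix_\cE)^{-1}s$ has no zero coordinate, so each active coordinate is a non-constant affine function of $e^\theta$ and hence nonzero between kinks; your invertibility computation via the block-triangular structure is in fact more explicit than the paper's, which simply cites \cite{bertrand_2020_implicit,bertrand_2021_journal}.
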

\begin{proof}
We first show that $F_\eta$ is continuously differentiable in a neighborhood of $(x^\star(\theta),\theta)$, for any $\theta$ that is not a kink. Since the $\ST$ operator is continuously differentiable everywhere except on the closed set: 
$$
\cS = \left\{(a,b)\in\RR^d\times \RR\,:\,\exists i\in[1,d], |a_i| = b\right\}\,,
$$
it suffices to show from \eqref{eq:Flasso} that $(x^\star(\theta) - \eta \dataMatrix^\top(\dataMatrix x^\star(\theta)-b),\eta e^\theta) \notin \cS$. For that purpose, we  characterize $x^\star(\theta)$ using the subgradient of \eqref{eq:lasso} as follows: there exists $\gamma\in\RR^d$ such that
$$
\dataMatrix^\top\left(b - \dataMatrix x^\star(\theta)\right) = e^\theta \gamma\,,
$$
where
$$
\begin{cases}
\gamma_i = \sign(x^\star(\theta)_i) & \text{ if } x^\star(\theta)_i \neq 0\,,\\
\gamma_i \in [-1,1]& \text{ otherwise.}
\end{cases}
$$
We thus need to show that $(x^\star(\theta) + \eta e^\theta \gamma,\eta e^\theta) \notin \cS$. We first see that, for any $i\in[1,d]$ such that $x^\star(\theta)_i \neq 0$,
    $$
    |x^\star(\theta)_i + \eta e^\theta \gamma_i| = |x^\star(\theta)_i + \eta e^\theta \sign(x^\star(\theta)_i)| > \eta e^\theta\,.
    $$
The case $x^\star(\theta)_i \neq 0$ requires more care, since the property $\gamma_i \in [-1,1]$ is not sufficient to show that $|x^\star(\theta)_i + \eta e^\theta \gamma_i| = \eta e^\theta |\gamma_i|$ is not equal to $\eta e^\theta$: we need to show that, in fact, $|\gamma_i|<1$. For that purpose, let $\cE(\theta) = \left\{i\in[1,d]\,:\,|\gamma_i| = 1 \right\}$. Denoting $\dataMatrix_{\cE(\theta)}$ the matrix made of the columns of $\dataMatrix$ in $\cE(\theta)$, we know that, under the assumptions of Theorem~\ref{thm:lasso}, with probability one the matrix $\dataMatrix_{\cE(\theta)}^\top \dataMatrix_{\cE(\theta)}$ is invertible and the lasso problem has a unique solution given by $x^\star(\theta)_{\cE(\theta)^C} = 0$ and 
\begin{equation}\label{eq:xstarlasso}
x^\star(\theta)_{\cE(\theta)} = (\dataMatrix_{\cE(\theta)}^\top \dataMatrix_{\cE(\theta)})^{-1} \left(\dataMatrix_{\cE(\theta)}^\top b - e^\theta s(\theta)_{\cE(\theta)}\right)\,,
\end{equation}
where $s(\theta) = \sign(\dataMatrix^\top\left(b-\dataMatrix x^\star(\theta) )\right)) \in \{-1,0,1\}^d$ \cite{Tibshirani2013lasso}. Furthermore, we know that $\cE(\theta)$ is constant between two successive kinks \cite{Mairal12Complexity}, so if $x^\star(\theta)$ is not a kink then there is a neighborhood $[\theta_1,\theta_2]$ of $\theta$ such as $\cE(\theta')=\cE(\theta)$ and $s(\theta')_{\cE(\theta')}=s(\theta)_{\cE(\theta)}$, for any $\theta'\in[\theta_1,\theta_2]$. Let us now assume that $\dataMatrix$ is such that for any $\cE \subset [1,d]$ and $s\in\{-1,1\}^{|\cE|}$, $\dataMatrix_\cE^\top \dataMatrix_\cE$ is invertible and $(\dataMatrix_\cE^\top \dataMatrix_\cE)^{-1}s$ has no coordinate equal to zero. This happens with probability one under the assumptions of Theorem~\ref{thm:lasso} since the set of singular matrices is measure zero. Then we see from \eqref{eq:xstarlasso} that, for $\theta'\in[\theta_1 , \theta_2]$ and $i\in\cE(\theta)$, $x^\star(\theta')_i$ is an affine and non-constant function of $e^{\theta'}$. Since in addition $x^\star(\theta_1)_i$ and $x^\star(\theta_2)_i$ are either both nonnegative or nonpositive, then necessarily $x^\star(\theta)_i$ is positive or negative, respectively. In other words, we have shown that $|\gamma_i|=1 \implies x^\star(\theta)_i \neq 0$, or equivalently that $x^\star(\theta)_i = 0 \implies |\gamma_i|<1$. From this we deduce that for any $i\in[1,d]$ such that $x^\star(\theta)_i = 0$,
    $$
    |x^\star(\theta)_i + \eta e^\theta \gamma_i| = \eta e^\theta |\gamma_i| < \eta e^\theta\,.
    $$
This concludes the proof that $(x^\star(\theta) + \eta e^\theta \gamma,\eta e^\theta) \notin \cS$, and therefore that $F_\eta$ is continuously differentiable in a neighborhood of $(x^\star(\theta),\theta)$. The second condition for the smooth implicit theorem to hold, namely, the invertibility of $\nabla_1 F_\eta(x^\star(\theta),\theta)$, is easily obtained by explicit computation \cite[Proposition 1]{bertrand_2020_implicit,bertrand_2021_journal}
\end{proof}

\section{Experimental setup and additional results}
\label{appendix:experimental_setup_and_additional_results}

Our experiments use JAX \cite{jax}, which is Apache2-licensed and
scikit-learn \cite{scikit-learn}, which is BSD-licensed.

\begin{figure}[p]
\begin{subfigure}{.33\textwidth}
  \centering
  \includegraphics[width=\linewidth]{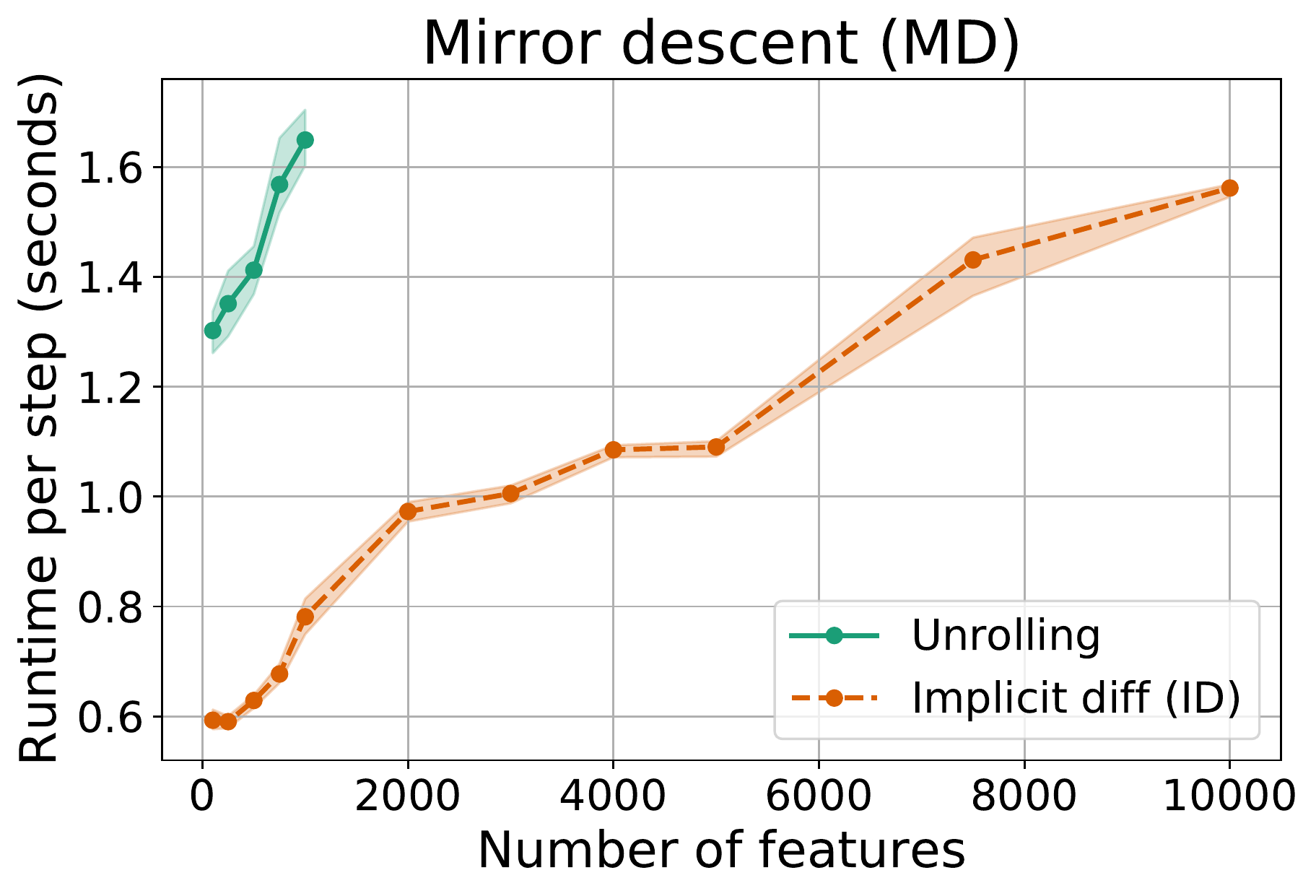}
  \caption{}
  \label{fig:multiclass_svm_runtime_gpu_a}
\end{subfigure}
\begin{subfigure}{.33\textwidth}
  \centering
  \includegraphics[width=\linewidth]{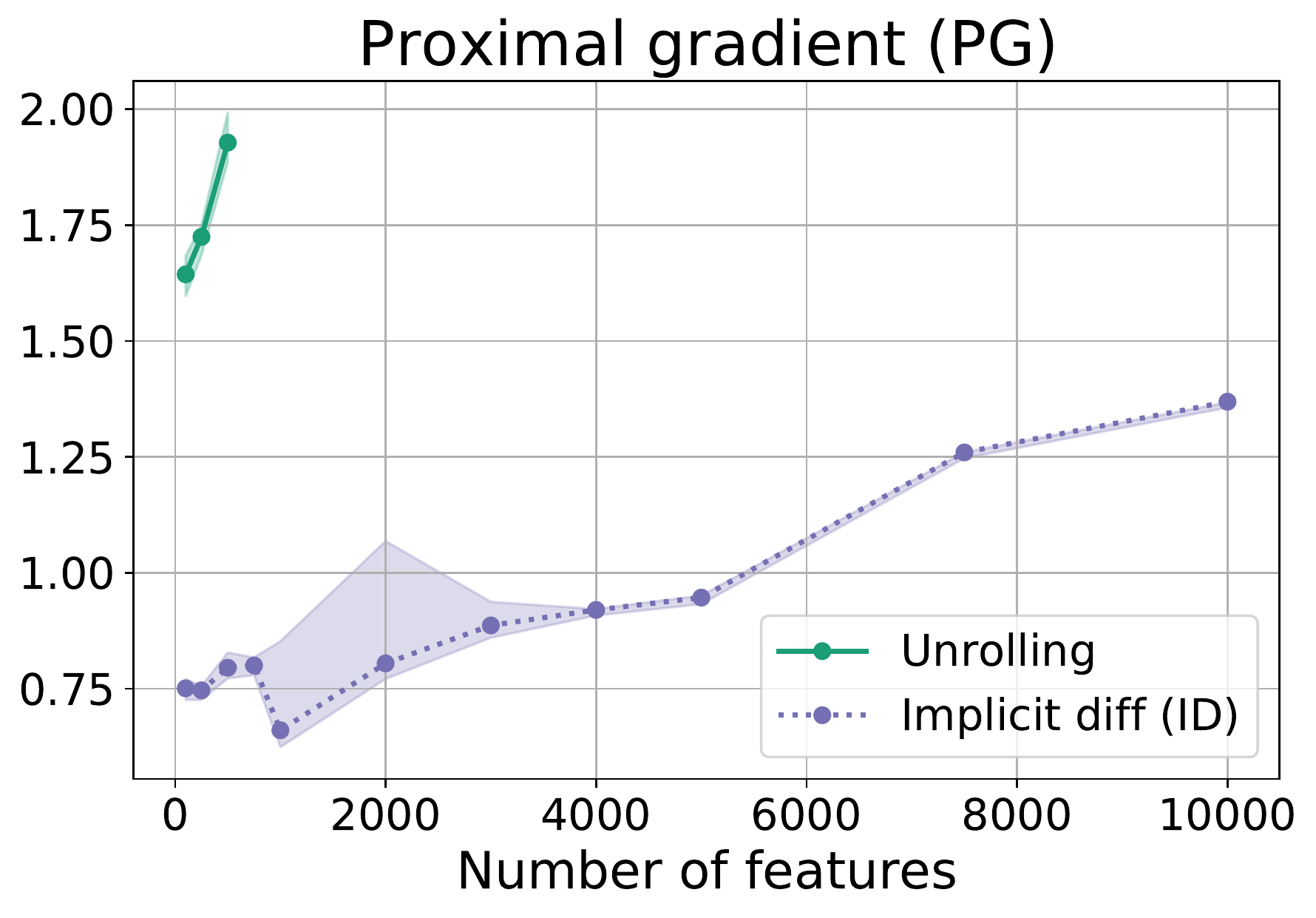}
  \caption{}
  \label{fig:multiclass_svm_runtime_gpu_b}
\end{subfigure}
\begin{subfigure}{.33\textwidth}
  \centering
  \includegraphics[width=\linewidth]{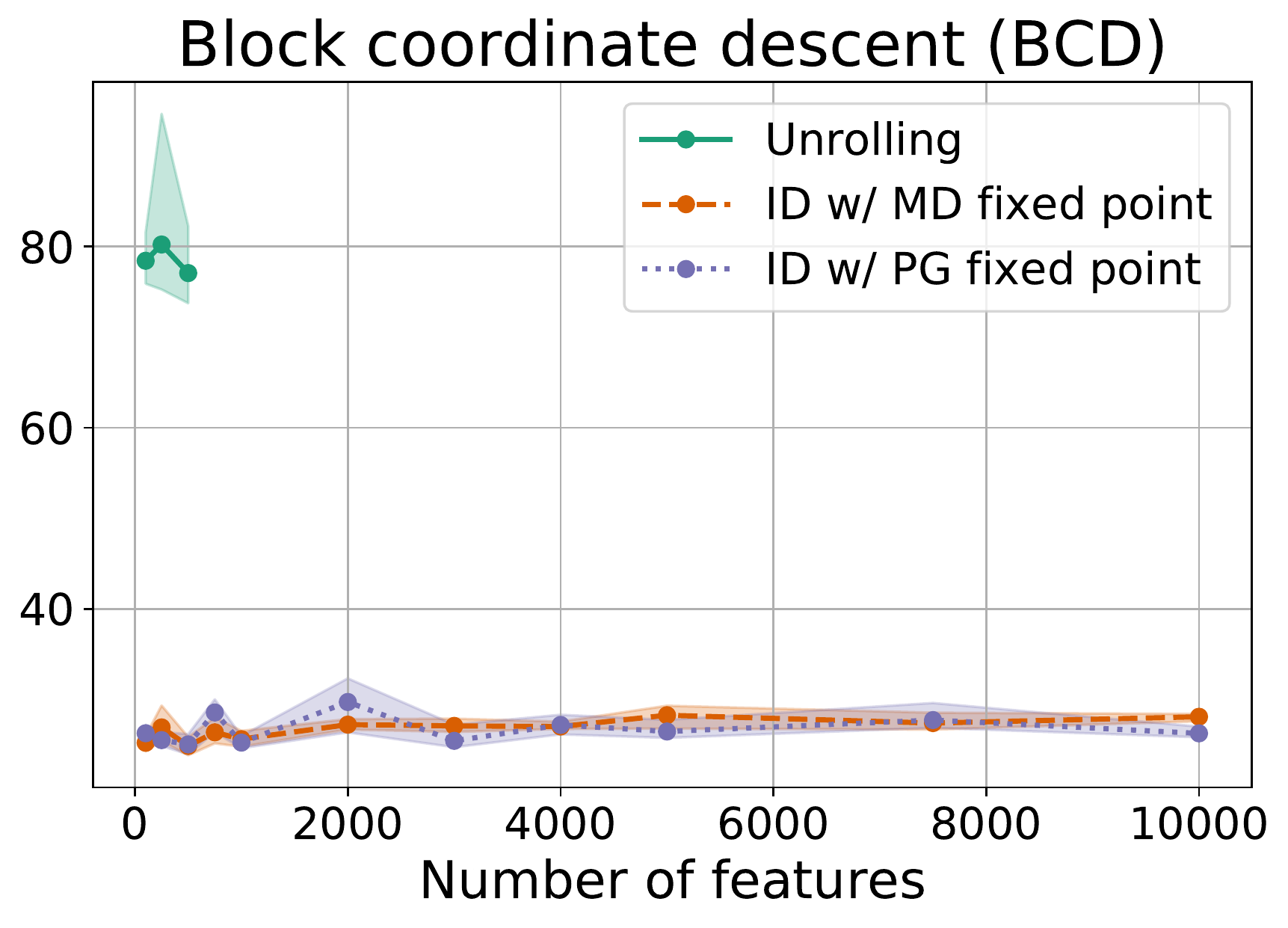}
  \caption{}
  \label{fig:multiclass_svm_runtime_gpu_c}
\end{subfigure}
\caption{GPU runtime comparison of implicit differentiation and unrolling for
    hyperparameter optimization of multiclass SVMs for multiple problem sizes
    (same setting as Figure \ref{fig:multiclass_svm_runtime_cpu}).
    Error bars represent 90\% confidence intervals. Absent data points were due
to out-of-memory errors (16 GB maximum).}
\label{fig:multiclass_svm_runtime_gpu}
\end{figure}

\begin{figure}[p]
\begin{subfigure}{.33\textwidth}
  \centering
  \includegraphics[width=\linewidth]{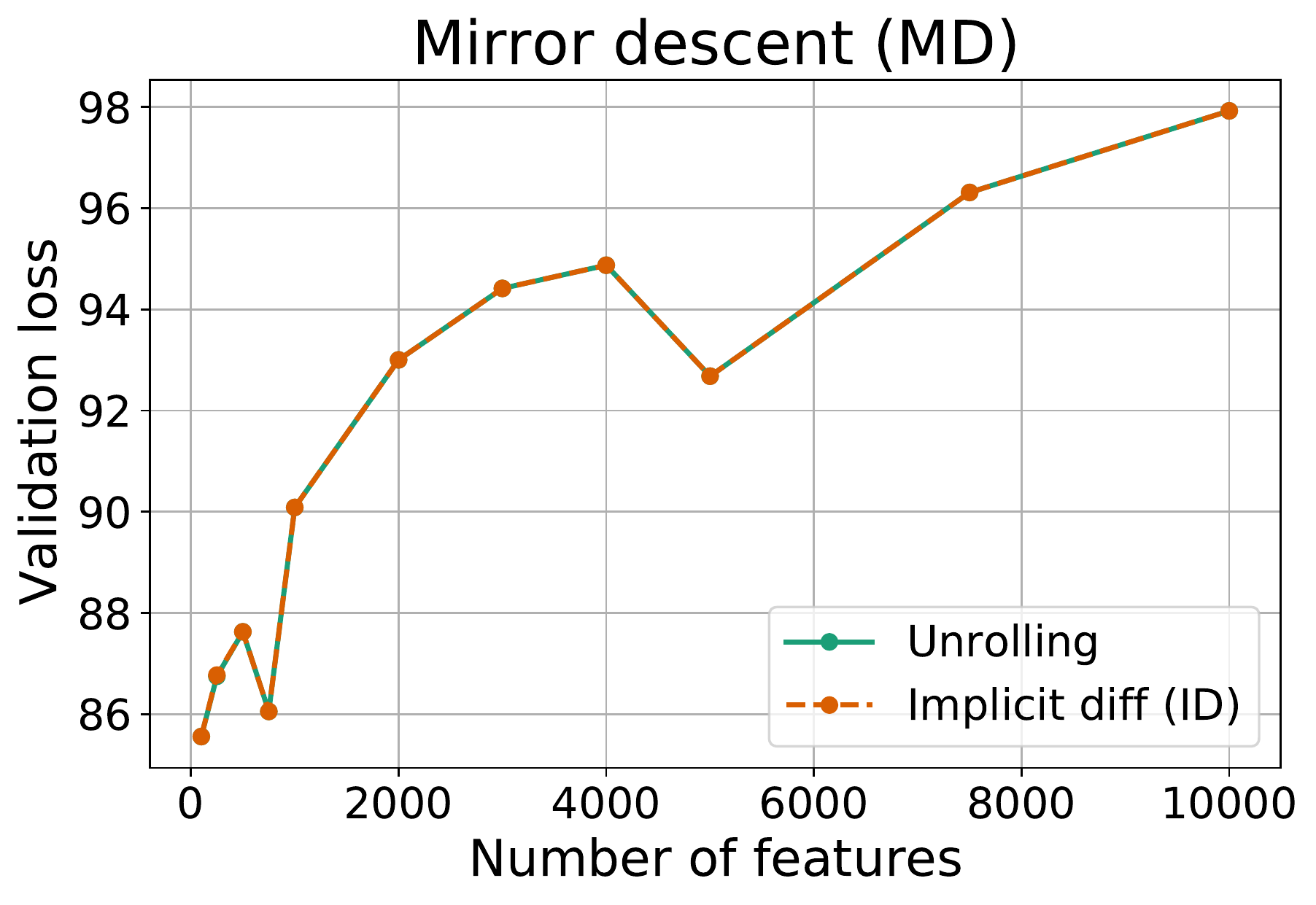}
  \caption{}
  \label{fig:multiclass_svm_loss_cpu_a}
\end{subfigure}
\begin{subfigure}{.33\textwidth}
  \centering
  \includegraphics[width=\linewidth]{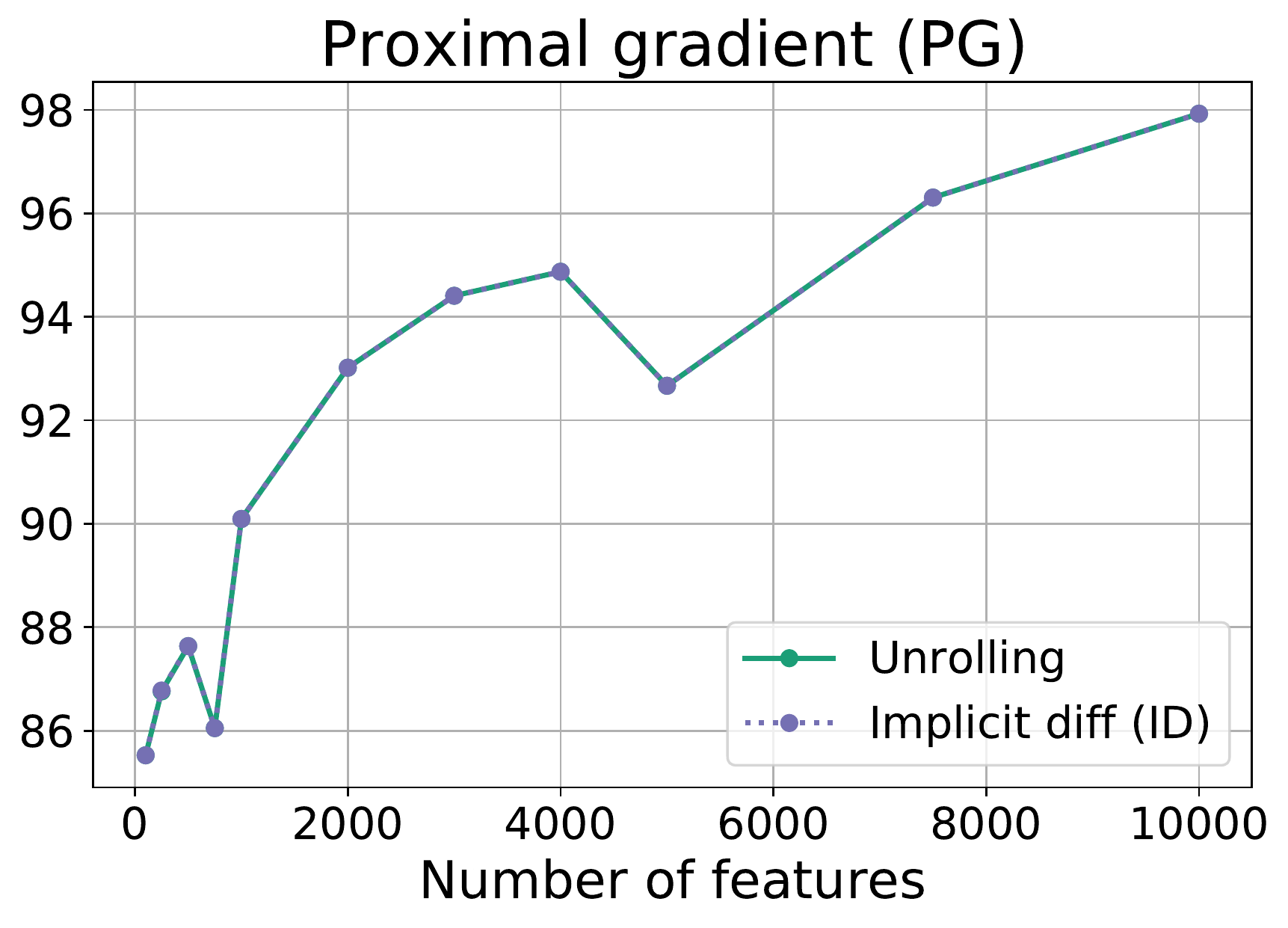}
  \caption{}
  \label{fig:multiclass_svm_loss_cpu_b}
\end{subfigure}
\begin{subfigure}{.33\textwidth}
  \centering
  \includegraphics[width=\linewidth]{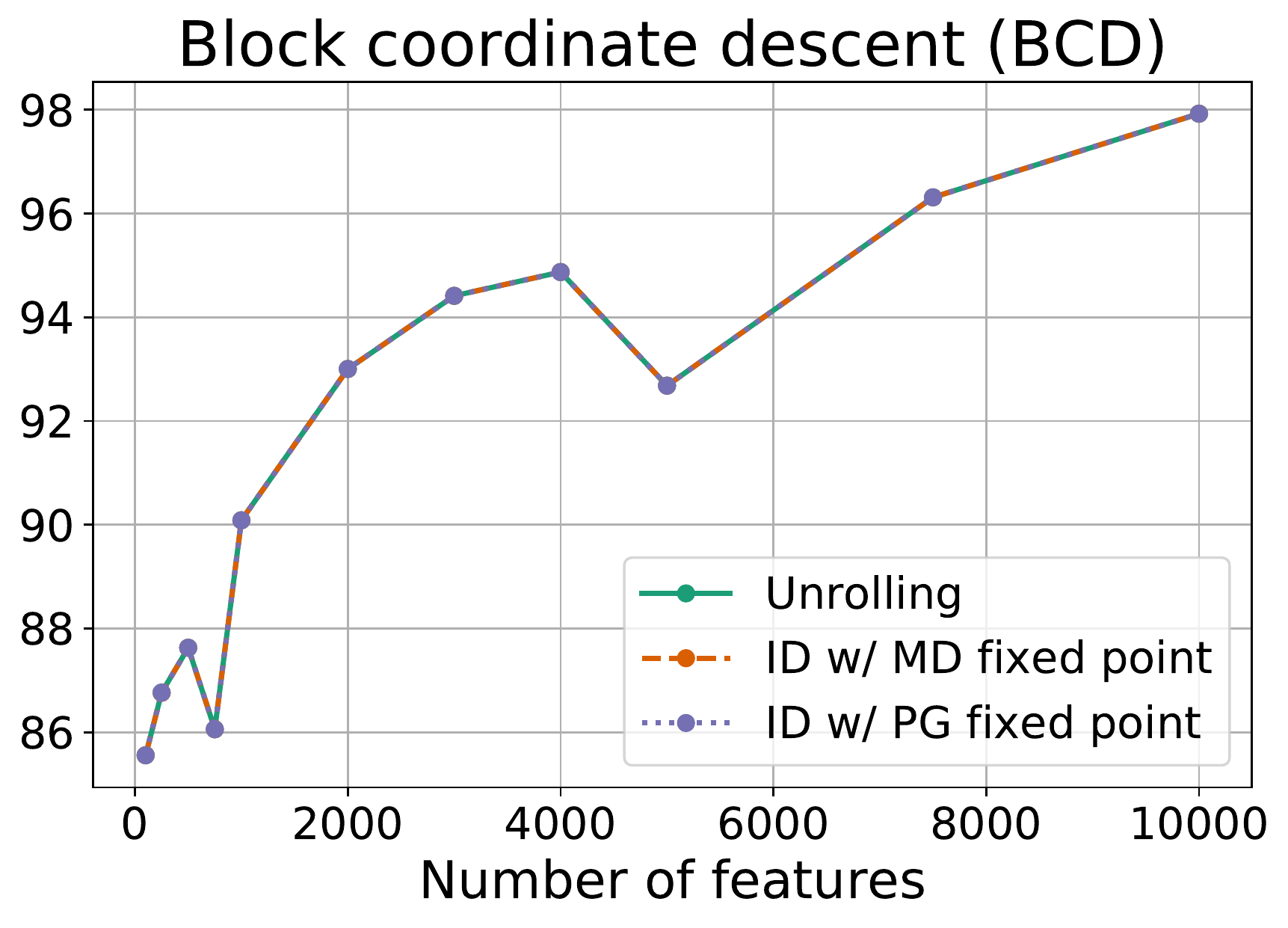}
  \caption{}
  \label{fig:multiclass_svm_loss_cpu_c}
\end{subfigure}
\caption{Value of the outer problem objective function (validation loss) for
    hyperparameter optimization of multiclass SVMs for multiple problem sizes
(same setting as Figure \ref{fig:multiclass_svm_runtime_cpu}).
As can be seen, all methods performed similarly in terms of validation loss.
This confirms that the faster runtimes for implicit differentiation compared to
unrolling shown in
Figure \ref{fig:multiclass_svm_runtime_cpu} (CPU) and Figure
\ref{fig:multiclass_svm_runtime_gpu} (GPU) are not at the cost of worse
validation loss.}
\label{fig:multiclass_svm_loss_cpu}
\end{figure}

\begin{figure}[p]
\begin{subfigure}{.33\textwidth}
  \centering
  \includegraphics[width=\linewidth]{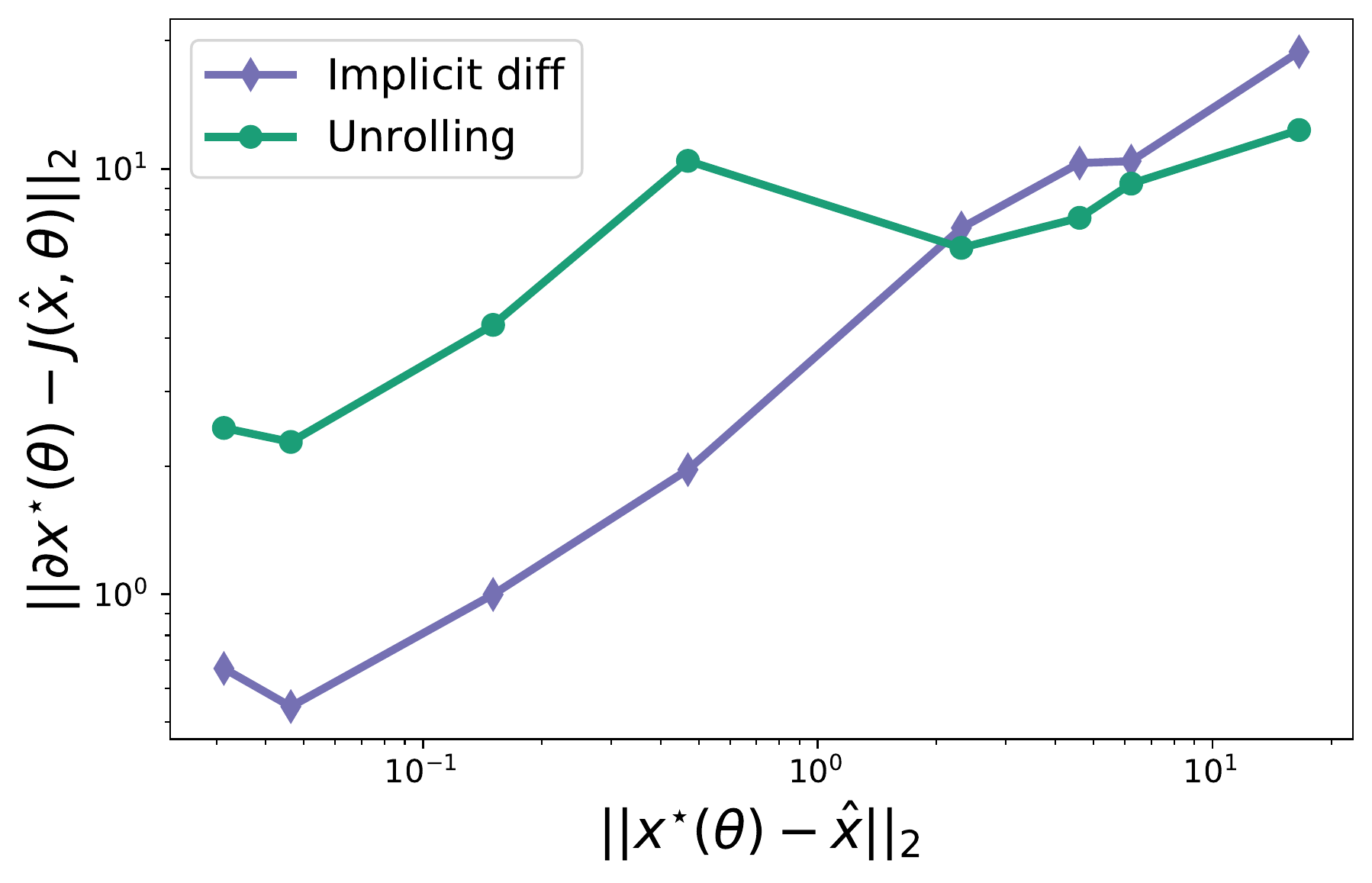}
  \caption{250 features}
  \label{fig:multiclass_svm_loss_jacobian_error_a}
\end{subfigure}
\begin{subfigure}{.33\textwidth}
  \centering
  \includegraphics[width=\linewidth]{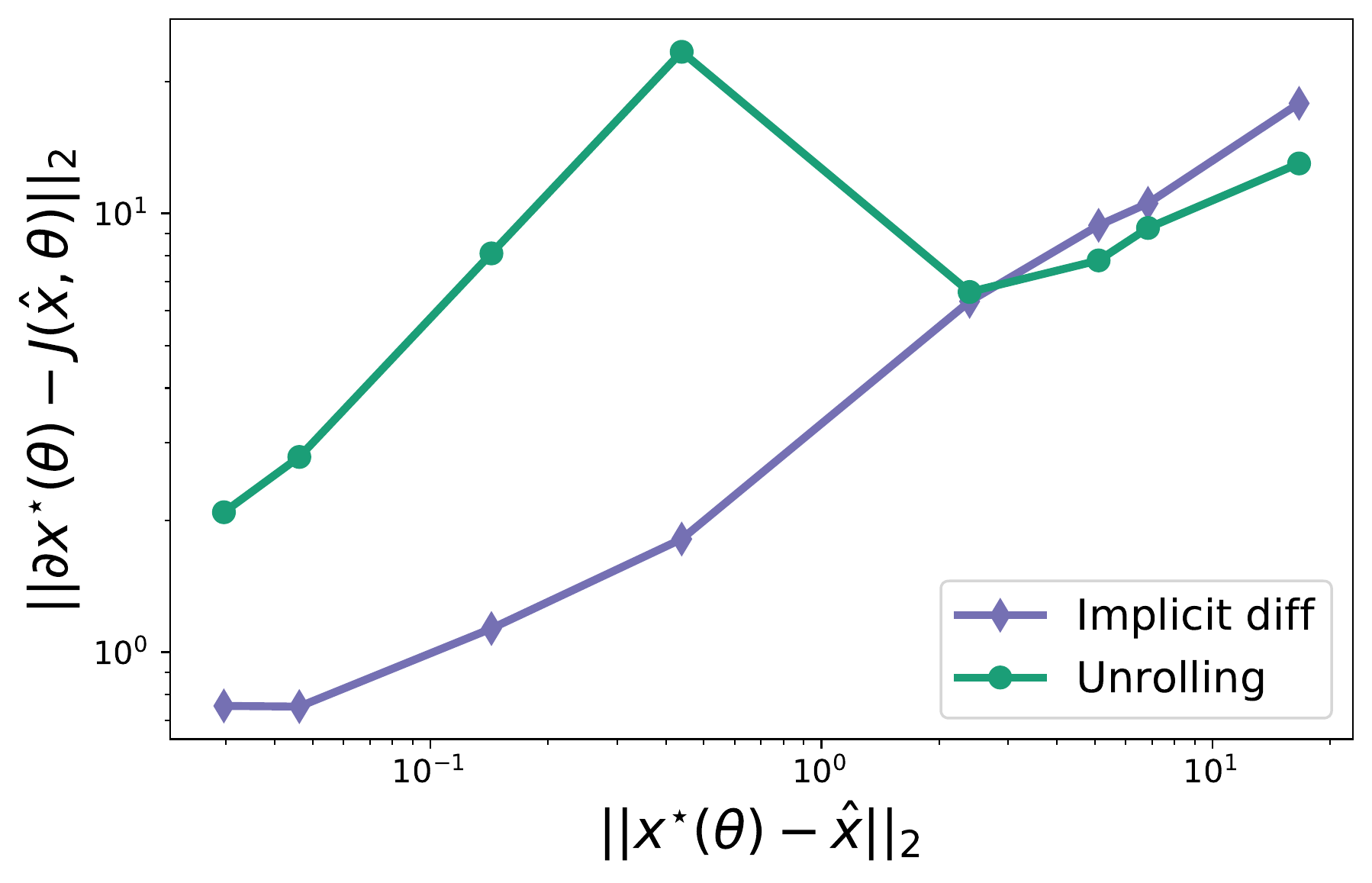}
  \caption{500 features}
  \label{fig:multiclass_svm_jacobian_error_b}
\end{subfigure}
\begin{subfigure}{.33\textwidth}
  \centering
  \includegraphics[width=\linewidth]{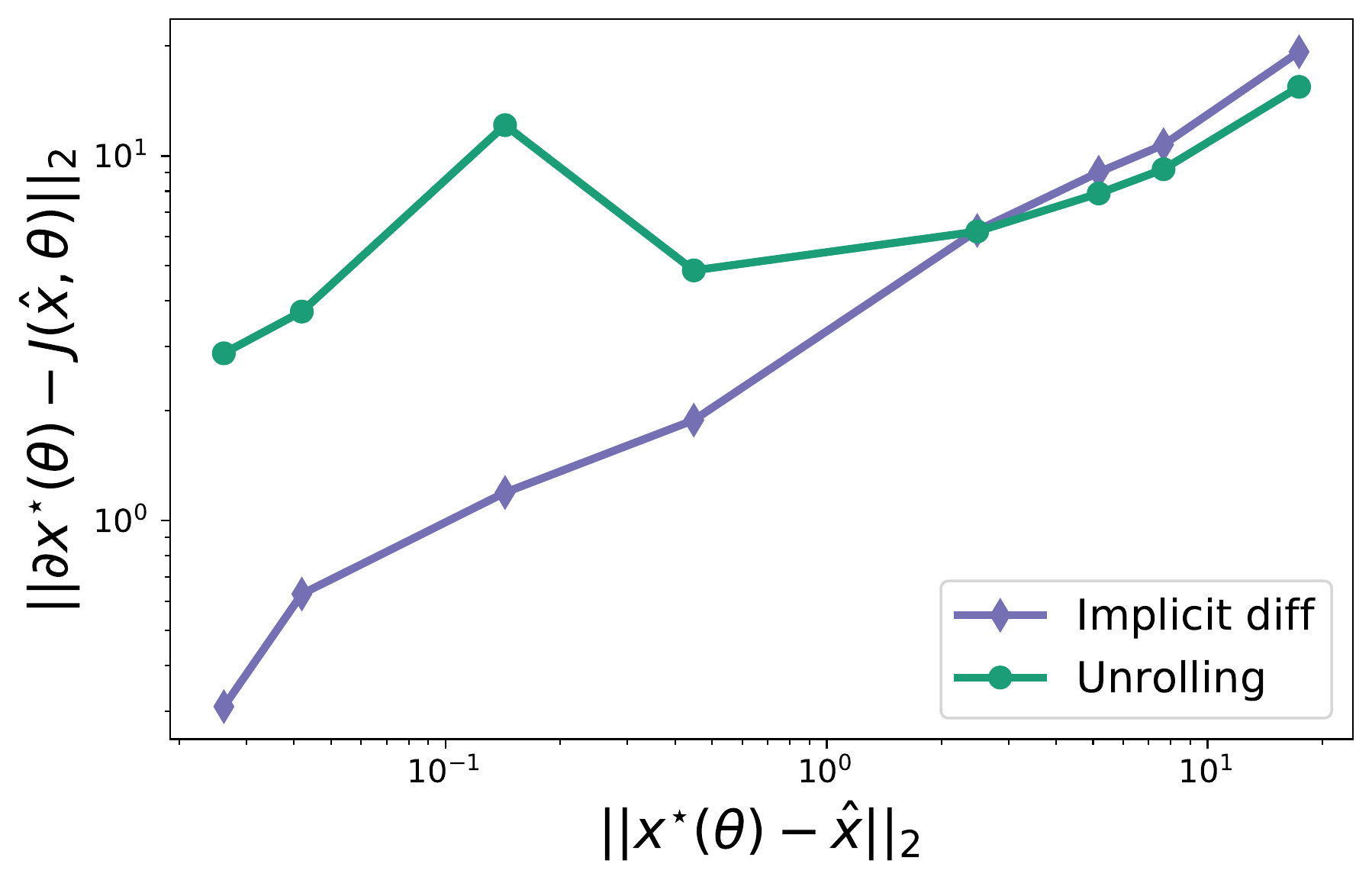}
  \caption{1000 features}
  \label{fig:multiclass_svm_jacobian_error_c}
\end{subfigure}
\caption{Jacobian error $\|\partial x^\star(\theta) - J(\hat x, \theta)\|_2$
(see also Definition \ref{DEF:jac-est}) evaluated with a regularization
parameter of $\theta=1$,
as a function of solution error
$\|x^\star(\theta) - \hat x\|_2$ when varying the number of features, on the
multiclass SVM task (see Appendix
\ref{appendix:multiclass_svm} for a detailed description of the experimental
setup). The ground-truth solution $x^\star(\theta)$ is
computed using the liblinear solver \cite{liblinear} available in scikit-learn
\cite{scikit-learn} with a very low tolerance of $10^{-9}$. 
Unlike in Figure \ref{fig:jac-precision}, which was on ridge regression, the ground-truth Jacobian $\partial
x^\star(\theta)$ cannot be computed in closed form, in the more difficult setting of multiclass SVMs. We therefore use
a finite difference to approximately compute $\partial x^\star(\theta)$.
Our results nevertheless confirm similar trends as in Figure
\ref{fig:jac-precision}.
}
\label{fig:multiclass_svm_jacobian_error}
\end{figure}

\subsection{Hyperparameter optimization of multiclass SVMs}
\label{appendix:multiclass_svm}

\paragraph{Experimental setup.}

Synthetic datasets were generated using \texttt{scikit-learn}'s
\texttt{sklearn.datasets.make\_classification} ~\cite{scikit-learn}, following
a model adapted from~\cite{guyon2003design}. All datasets consist of $m=700$ training samples belonging to $k=5$ distinct classes. To simulate problems of different sizes, the number of features is varied as $p \in \left\{100, 250, 500, 750, 1000, 2000, 3000, 4000, 5000, 7500, 10000\right\}$, with 10\% of features being informative and the rest random noise. In all cases, an additional $m_{\mathrm{val}}=200$ validation samples were generated from the same model to define the outer problem.

For the inner problem, we employed three different solvers: (i) mirror descent, (ii) (accelerated) proximal gradient descent and (iii) block coordinate descent. Hyperparameters for all solvers were individually tuned manually to ensure convergence across the range of problem sizes. For mirror descent, a stepsize of $1.0$ was used for the first $100$ steps, following a inverse square root decay afterwards up to a total of $2500$ steps. For proximal gradient descent, a stepsize of $5 \cdot 10^{-4}$ was used for $2500$ steps. The block coordinate descent solver was run for $500$ iterations. All solvers used the same initialization, namely, $x_{\mathrm{init}}=\frac{1}{k}1_{m \times k}$, which satisfies the dual constraints.

For the outer problem, gradient descent was used with a stepsize of $5 \cdot 10^{-3}$ for the first $100$ steps, following a inverse square root decay afterwards up to a total of $150$ steps.

Conjugate gradient was used to solve the linear systems in implicit differentiation for at most $2500$ iterations.

All results reported pertaining CPU runtimes were obtained using an internal compute cluster. GPU results were obtained using a single NVIDIA P100 GPU with 16GB of memory per dataset. For each dataset size, we report the average runtime of an individual iteration in the outer problem, alongside a 90\% confidence interval estimated from the corresponding $150$ runtime values.

\paragraph{Additional results}

Figure~\ref{fig:multiclass_svm_runtime_gpu} compares the runtime of implicit
differentiation and unrolling on GPU. These results highlight a fundamental
limitation of the unrolling approach in memory-limited systems such as
accelerators, as the inner solver suffered from out-of-memory errors for most
problem sizes ($p \ge 2000$ for mirror descent, $p \ge 750$ for proximal
gradient and block coordinate descent). While it might be possible to ameliorate
this limitation by reducing the maximum number of iterations in the inner
solver, doing so might lead to additional challenges~\cite{wu2018understanding} and require careful tuning.

Figure~\ref{fig:multiclass_svm_loss_cpu} depicts the validation loss (value of the outer problem objective function) at convergence. It shows that all approaches were able to solve the outer problem, with solutions produced by different approaches being qualitatively indistinguishable from each other across the range of problem sizes considered.

Figure \ref{fig:multiclass_svm_jacobian_error} shows the Jacobian error achieved
as a function of the solution error, when varying the number of features.

\subsection{Task-driven dictionary learning}\label{annex:TDDL}
We downloaded from
\url{http://acgt.cs.tau.ac.il/multi_omic_benchmark/download.html} a set of
breast cancer gene expression data together with survival information generated
by the TCGA Research Network (\url{https://www.cancer.gov/tcga}) and processed
as explained by \cite{Rappoport2018Multi}. The gene expression matrix contains the expression value for p=20,531 genes in m=1,212 samples, from which we keep only the primary tumors (m=1,093). From the survival information, we select the patients who survived at least five years after diagnosis ($m_1=200$), and the patients who died before five years ($m_0=99$), resulting in a cohort of $m=299$ patients with gene expression and binary label. Note that non-selected patients are those who are marked as alive but were not followed for 5 years.

To evaluate different binary classification methods on this cohort, we repeated 10 times a random split of the full cohort into a training (60\%), validation (20\%) and test (20\%) sets. For each split and each method, 1) the method is trained with different parameters on the training set, 2) the parameter that maximizes the classification AUC on the validation set is selected, 3) the method is then re-trained on the union of the training and validation sets with the selected parameter, and 4) we measure the AUC of that model on the test set. We then report, for each method, the mean test AUC over the 10 repeats, together with a 95\% confidence interval defined a mean $\pm$ 1.96 $\times$ standard error of the mean.

We used Scikit Learn's implementation of logistic regression regularized by $\ell_1$ (lasso) and $\ell_2$ (ridge) penalty from \texttt{sklearn.linear\_model.LogisticRegression}, and varied the  \texttt{C} regularization parameter over a grid of 10 values: $\{10^{-5}, 10^{-3}, \ldots, 10^4\}$. For the unsupervised dictionary learning experiment method, we estimated a dictionary from the gene expression data in the training and validation sets, using \texttt{sklearn.decomposition.DictionaryLearning(n\_components=10, alpha=2.0)}, which produces sparse codes in  $k=10$ dimensions with roughly $50\%$ nonzero coefficients by minimizing the squared Frobenius reconstruction distance with lasso regularization on the code. We then use \texttt{sklearn.linear\_model.LogisticRegression} to train a logistic regression on the codes, varying the ridge regularization parameter \texttt{C} over a grid of 10 values $\{10^{-1}, 10^{0}, \ldots, 10^8\}$.

Finally, we implemented the task-driven dictionary learning model
(\ref{eq:bilevel_sparse_coding}) with our toolbox, following the pseudo-code in
Figure~\ref{fig:sparse_coding_code}. Like for the unsupervised dictionary
learning experiment, we set the dimension of the codes to $k=10$, and a fixed
elastic net regularization on the inner optimization problem to ensure that the
codes have roughly $50\%$ sparsity. For the outer optimization problem, we solve
an $\ell_2$ regularized ridge regression problem, varying again the ridge
regularization parameter \texttt{C} over a grid of 10 values $\{10^{-1}, 10^{0},
\ldots, 10^8\}$. Because the outer problem is non-convex, we minimize it using
the Adam optimizer~\cite{kingma2014adam} with default parameters.

\subsection{Dataset Distillation}

\paragraph{Experimental setup.}
For the inner problem, we used gradient descent with backtracking line-search, while for the outer problem we used gradient descent with momentum and a fixed step-size. The momentum parameter was set to $0.9$ while the step-size was set to $1$. 

Figure \ref{fig:distillation} was produced after 4000 iterations of the outer
loop on CPU (Intel(R) Xeon(R) Platinum P-8136 CPU @ 2.00GHz), which took 1h55.
Unrolled differentiation took instead 8h:05 ($4$ times more) to run the same
number of iterations. As can be seen in Figure \ref{fig:distillation2}, the output
is the same in both approaches.

\begin{figure}[H]
\includegraphics[width=\textwidth]{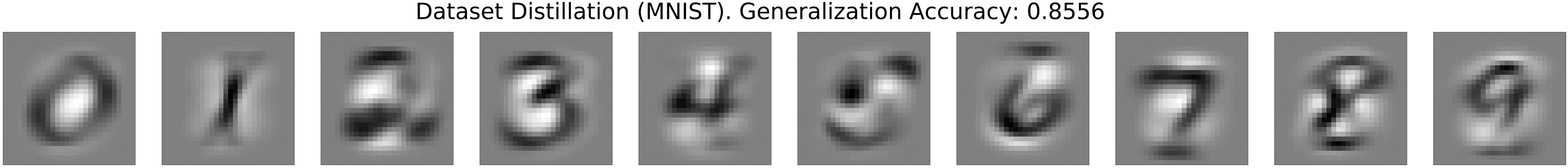}
\caption{Distilled MNIST dataset $\theta \in \RR^{k \times p}$ obtained by
    solving \eqref{eq:bilevel_distillation} through unrolled differentiation. Although there is no qualitative difference, the implicit differentiation approach is 4 times faster.}
\label{fig:distillation2}
\end{figure}

\subsection{Molecular dynamics}

Our experimental setup is adapted from the JAX-MD example notebook available at \url{https://github.com/google/jax-md/blob/master/notebooks/meta_optimization.ipynb}.
 
 We emphasize that calculating the gradient of the total energy objective, $f(x,
 \theta) = \sum_{ij} U(x_{i,j}, \theta)$, with respect to the diameter
 $\theta$ of the smaller particles, $\nabla_1 f(x, \theta)$, does not
 require implicit differentiation or unrolling.
This is because $\nabla_1 f(x, \theta) = 0$ at $x = x^\star(\theta)$:
\begin{equation}
\nabla_\theta f(x^\star(\theta), \theta)
= 
\partial x^\star(\theta)^\top \nabla_1 f(x^\star(\theta), \theta) +
\nabla_2 f(x^\star(\theta), \theta) =
\nabla_2 f(x^\star(\theta), \theta).
\end{equation}
This is known as Danskin's theorem or envelope theorem.  Thus instead, we
consider sensitivities of position $\partial x^\star(\theta)$ directly, which
does require implicit differentiation or unrolling.

Our results comparing implicit and unrolled differentiation for calculating the
sensitivity of position are shown in Figure \ref{fig:jax-md-grad-norm}. We use
BiCGSTAB~\cite{Vorst1992-bicgstab}
to perform the tangent linear solve.  Like in
the original JAX-MD experiment, we use $k=128$ particles in $m=2$ dimensions.

\begin{figure}[H]
    \centering
    \includegraphics[width=0.8\linewidth]{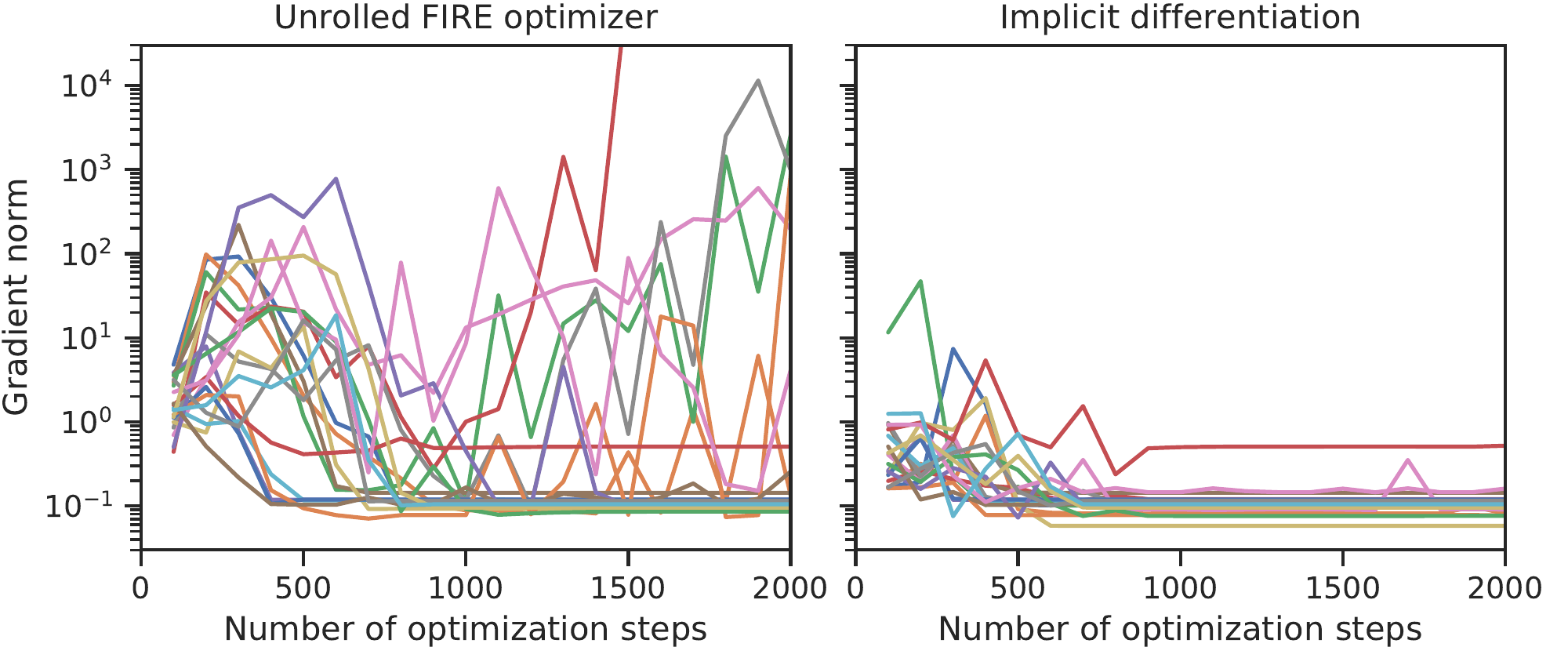}
    \caption{L1 norm of position sensitivities in the molecular dynamics simulations, for 40 different random initial conditions (different colored lines).
    Gradients through the unrolled FIRE optimizer~\cite{structural-relaxation-2006} for many initial conditions do not converge, in contrast to implicit differentiation.}
    \label{fig:jax-md-grad-norm}
\end{figure}


\end{document}